\documentclass{article}





 \usepackage[nonatbib,preprint]{neurips_2020}

\usepackage[utf8]{inputenc} 
\usepackage[T1]{fontenc}    
\usepackage{hyperref}       
\usepackage{url}            
\usepackage{booktabs}       
\usepackage{amsfonts}       
\usepackage{nicefrac}       
\usepackage{microtype}     
\usepackage[misc]{ifsym}
\usepackage{bbding}

\usepackage{amsmath}
\usepackage{amsthm}
\usepackage{dsfont}
\usepackage{color}
\usepackage{graphicx}
\usepackage{appendix}

\newtheorem{theorem}{Theorem}
\newtheorem{definition}{Definition}
\newtheorem{principle}{Principle}

\title{Knowledge Distillation in Wide Neural Networks: Risk Bound, Data Efficiency and Imperfect Teacher}

%

\author{%
    Guangda Ji \\
    Peking University\\
    \texttt{jiguangda@pku.edu.cn} \\
  \And
    Zhanxing Zhu \Envelope \\
    School of Mathematical Sciences, Peking University \\
    Center for Data Science, Peking University \\
    \texttt{zhanxing.zhu@pku.edu.cn} \\
}

\begin{document}

\maketitle

\begin{abstract}
  Knowledge distillation is a strategy of training a student network with guide of the soft output from a teacher network. It has been a successful method of model compression and knowledge transfer. However, currently knowledge distillation lacks a convincing theoretical understanding. On the other hand, recent finding on neural tangent kernel enables us to approximate a wide neural network with a linear model of the network's random features.   In this paper, we theoretically analyze the knowledge distillation of a wide neural network. First we provide a transfer risk bound for the linearized model of the network. Then we propose a metric of the task's training difficulty, called data inefficiency. Based on this metric, we show that for a perfect teacher, a high ratio of teacher's soft labels can be beneficial. Finally, for the case of imperfect teacher, we find that hard labels can correct teacher's wrong prediction, which explains the practice of mixing hard and soft labels.
\end{abstract}

\section{Introduction}
Deep neural network has been a successful tool in many fields of artificial intelligence. However, we typically require  deep and complex networks and much effort of training to achieve good generalization. Knowledge distillation(KD) is a method introduced in \cite{hinton2015distilling}, which can transfer knowledge from a large trained model (i.e. \textit{teacher}) to another smaller network (i.e. \textit{student}). Through distillation, the student network can achieve better performance than direct training from scratch  \cite{hinton2015distilling}.
The vanilla form of KD in classification problem has a combined loss of soft and hard labels,
\begin{equation*}
  \mathcal{L} = \rho L(\mathbf{y}_\mathrm{s},\mathbf{y}_\mathrm{t}) + (1-\rho) L(\mathbf{y}_\mathrm{s},\mathbf{y}_\mathrm{g})
\end{equation*}
where \(\mathbf{y}_\mathrm{t}\) and \(\mathbf{y}_\mathrm{s}\) are \textit{teacher} and \textit{student}'s soft labels ,  \(\mathbf{y}_\mathrm{g}\)  are \textit{ground truth} labels and \(\rho\) is called \textit{soft ratio}.

Apart from this original form of KD, many variants that share the teacher-student paradigm are proposed. \cite{huang2017like} uses intermediate layers of neural network to perform distillation. \cite{xu2017training} and \cite{yoo2019knowledge} adopt adversarial training to reduce the difference of label distribution between teacher and student. In \cite{lassance2020deep} and \cite{lee2019graph}, the authors consider graph-based distillation. Self distillation, proposed in \cite{furlanello2018born}, distills the student from an earlier generation of student of same architecture. The latest generation can outperform the first generation significantly.

A common conjecture on why KD works is that it provides extra information on the output distribution, and student model can use this ``dark knowledge'' to achieve higher accuracy. However, KD still lacks a convincing theoretical explanation.  \cite{tang2020understanding} argues that KD not only transfers super-class correlations to students, but also gives higher credits to correct samples in gradient flow. \cite{barwey2020extracting} finds that KD enables student to learn more task-related pixels and discard the background in image classification tasks. \cite{mobahi2020self} shows that self-distillation has a regularization effect on student's logits. However, very few works establish a
comprehensive view on the  knowledge distillation, including risk bound, the role of the soft ratio and how efficient the distillation makes use of data.

In this work, we attempt to deal with these issues with the help of neural tangent kernel and wide network linearization, i.e. considering distillation process for linearized neural networks. We focus on the soft ratio \(\rho\) as it serves as a continuous switch between original hard label training and soft label distillation. The main contributions of our work are summarized as follows.
\begin{itemize}
  \item We experimentally observe faster convergence rate of transfer risk with respect to sample size for softer tasks, i.e. with high $\rho$. We try to explain this with a new transfer risk bound for converged linearized student networks, based on distribution in random feature space. We show that the direction of weights converges faster for softer tasks. (Sec.~\ref{sec:transfer_risk_bound})

  \item We introduce a metric on task's difficulty, called \textit{data inefficiency}. Through this metric we show, for a perfect teacher, early stopping and higher soft ratio are beneficial in terms of making efficient use of data. (Sec.~\ref{sec:data_inefficiency})

  \item We discuss the benefits of hard labels in imperfect teacher distillation in the scenario of KD practice. We show that a little portion of hard labels can correct student's outputs pointwisely, and also reduce the angle between student and oracle weight. (Sec.~\ref{sec:imperfect})
\end{itemize}

\paragraph{Related Work}

Our work is built on neural tangent kernel techniques introduced in \cite{jacot2018neural,lee2019wide}. They find that in the limit of infinitely wide network, the Gram matrix of network's random feature tends to a fixed limit called neural tangent kernel (NTK), and also stays almost constant during training. This results in an equivalence of training dynamics between the original network and linear model of network's random features. Therefore  we replace the network with its linear model to avoid the trouble of nonlinearity.
The most related work to ours is \cite{phuong2019towards}. They consider distillation of linear models and gives a loose  transfer risk bound. This bound is based on the probability distribution in feature space and therefore is different form the traditional generalization given by Rademacher complexity. We improve their bound and generalize their formulation to the case of linearization of an actual neural network.

\section{\label{sec:problem_setup}Problem Setup}

\begin{figure}[htbp]
  \centering
  \includegraphics[width=0.32\textwidth]{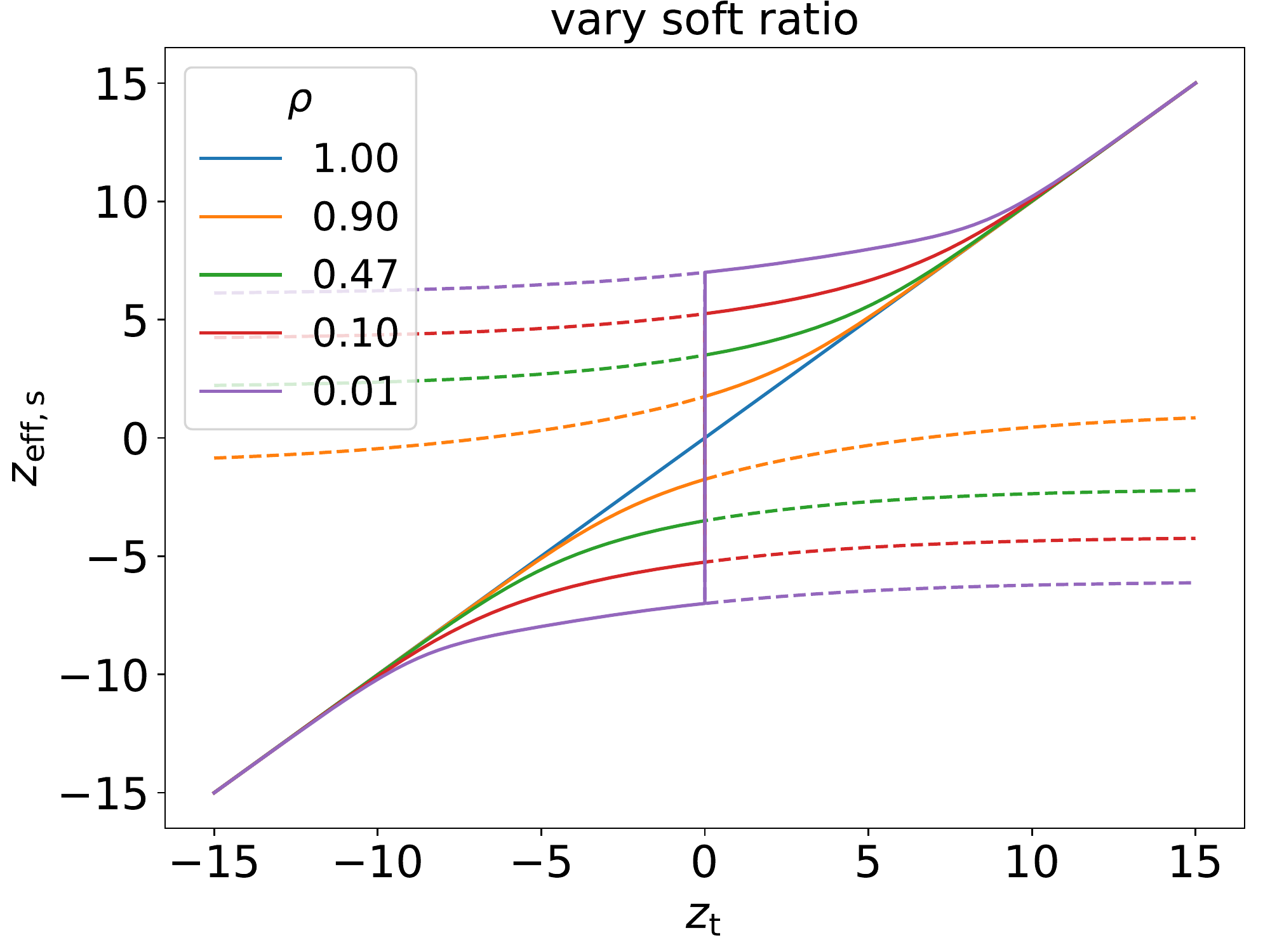}
  \includegraphics[width=0.32\textwidth]{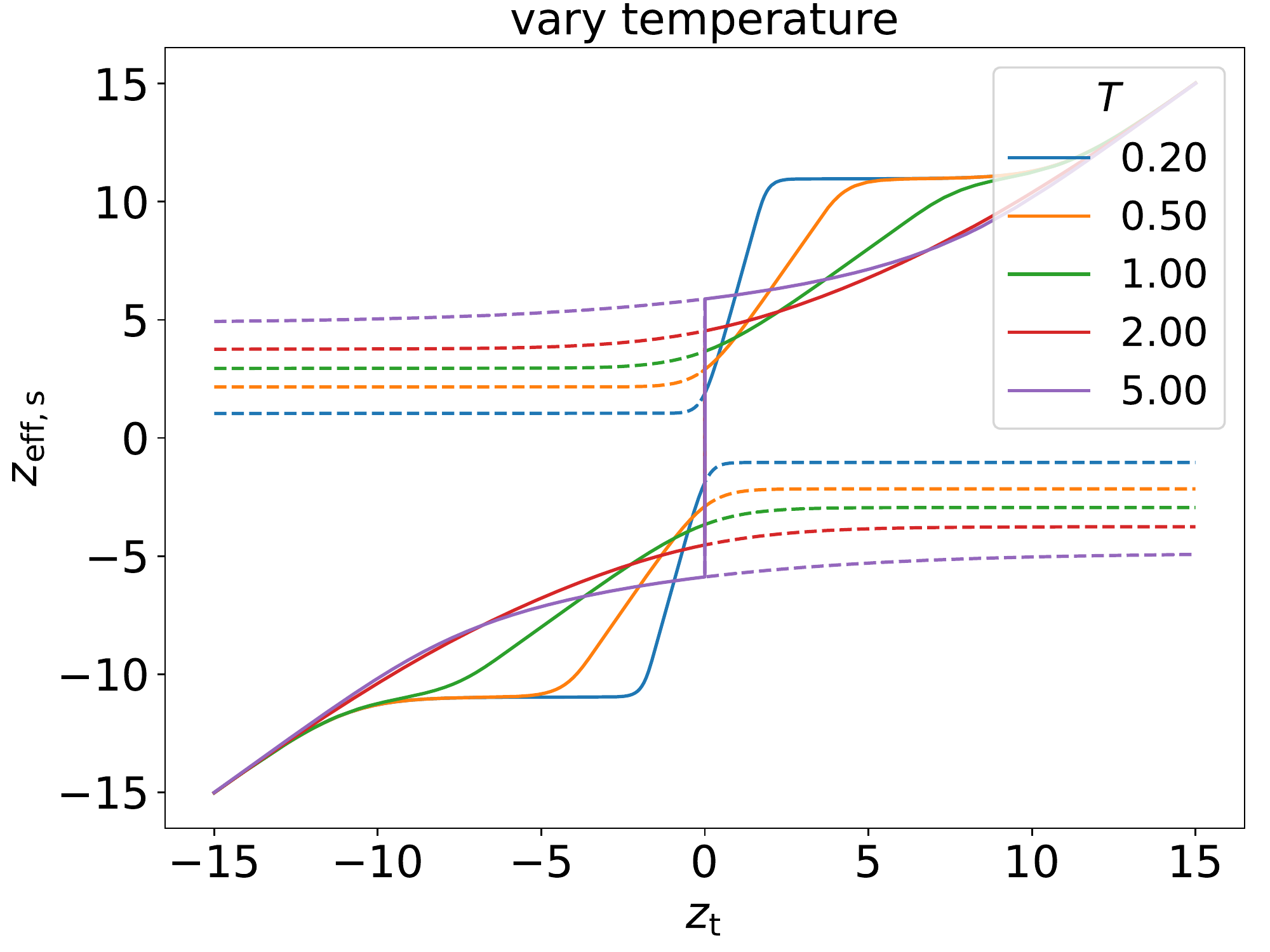}
  \vspace{-0.2cm}
  \caption{\label{fig:1}
    Effective student logits \(z_{\mathrm{s,eff}}\) as a function of  \(z_{\mathrm{t}}\) and  \(y_{\mathrm{g}}\). The \textbf{left} and \textbf{right} figure shows how soft ratio \(\rho\) (with \(T=5.0\)) and temperature \(T\) (with \(\rho=0.05\)) can affect the shape of \(z_{\mathrm{s,eff}}(z_{\mathrm{t}}, y_{\mathrm{g}})\).
    Each point is attained by solving Eq.~\ref{eq:effective_logits} with first order gradient method.
    Solid lines show a correct teacher \(y_{\mathrm{g}} = \mathds{1}\{ z_{\mathrm{t}} > 0 \} \), and dashed lines denote a wrong teacher \(y_{\mathrm{g}} = \mathds{1}\{ z_{\mathrm{t}} < 0 \} \). The existence of hard label produces a discontinuity in  \(z_{\mathrm{s,eff}}(z_{\mathrm{t}}, y_{\mathrm{g}})\).
  }
  \vspace{-0.2cm}
\end{figure}

We consider a binary classification problem on \(x\in \mathcal{X} \subseteq \mathbb{R}^d\). We assume \(x\) has a fixed distribution \(P(x)\) and a ground truth classification boundary , \(y = \mathds{1}\{f_{\mathrm{g}}(x)>0\}\in \{0, 1\}\). The task is to find a student neural network \(z = f(x;w): \mathcal{X} \mapsto \mathbb{R}\) that best fits the data. \(w\) are network's weights and  \(z\) is its output logit. The parameters are initialized with NTK-parameterization in \cite{jacot2018neural},
\begin{equation}
  \begin{aligned}
    &h^{1} = \sigma_W W^{0}x/\sqrt{d} + \sigma_b b^{0}, x^{1}=g(h^{1}),\\
    &h^{l+1} = \sigma_W W^{l}x^{l}/\sqrt{m} + \sigma_b b^{l}, \;
    x^{l+1} = g(h^{l+1}),\;
    l=1,2,\cdots, L-1, \\
    &f(x; w) = \sigma_W W^{L}x^{L}/\sqrt{m} + \sigma_b b^{L},\; w = \cup_{i=0}^{L}\{W^{i}, b^{i}\},\;
  \end{aligned}
\end{equation}
where \(g(\cdot)\) is the activation function, \(W^{l}_{ij},b^{l}_i\sim \mathcal{N}(0,1)\) are the parameters of each layer, and \((\sigma_W,\sigma_b)\) are the hyperparameters. \((\sigma_W,\sigma_b)\) are choose to be \((1.0,1.0)\) throughout our experiments. The network trained by empirical risk minimization \(\hat{w} ={\arg \min }_{w} \mathcal{L}\) with (stochastic) gradient descent, and the \textit{distillation loss} is
\begin{equation}
  \mathcal{L} = \frac{1}{N}\sum_{n=1}^{N}\ell_{n} = \frac{1}{N}\sum_{n=1}^{N} \rho H(y_{\mathrm{t},n}, \sigma(\frac{z_{\mathrm{s},n}}{T})) + (1-\rho)H(y_{\mathrm{g},n}, \sigma(z_{\mathrm{s},n})) ,
\end{equation}
where \(H(p,q) = -\left[p\log q + (1-p) \log(1-q)\right]\) is binary cross-entropy loss, \(\sigma(z) = 1/(1+\exp(-z))\) is sigmoid function, \(z_{\mathrm{s},n} = f(x_n;w)\) are the output logits of \textit{student} network, \(y_{\mathrm{t},n} = \sigma(z_{\mathrm{t},n}/T)\) are soft labels  of \textit{teacher} network and \(y_{\mathrm{g},n} = \mathds{1}\{f_{\mathrm{g}}(x_n)>0\} \) are the \textit{ground truth} hard labels. \(T\) is \textit{temperature} and \(\rho\) is \textit{soft ratio}.

In this paper we focus on student's convergence behavior and neglect its training details. \emph{We assume the student is over-parameterized and wide}. 
\cite{du2019gradient} proves the convergence of network to global minima for L2 loss under this assumption. We believe this holds true for distillation loss, and give a further discussion in Appendix (Sec.~\ref{supp:distillation_loss_convergence}).
This means that as training time \(\tau\to\infty\), each student logit would converge to a target value that minimizes the loss of each sample,
\begin{equation}\label{eq:effective_logits}
  \lim_{\tau\to\infty} z_{\mathrm{s}}(\tau) = \hat{z}_{\mathrm{s}}, \quad \frac{d \ell}{d \hat{z}_{\mathrm{s}}} = \frac{\rho}{T}(\sigma(\hat{z}_{\mathrm{s}}/T) - \sigma(z_{\mathrm{t}}/T)) + (1-\rho)(\sigma(\hat{z}_{\mathrm{s}}) - y_{\mathrm{g}}) = 0.
\end{equation}
The implicit solution to this equation defines an \textit{effective student logit} \(z_{\mathrm{s,eff}} = \hat{z}_{\mathrm{s}}\) as a function of \(z_{\mathrm{t}}\) and \(y_{\mathrm{g}}\). In Fig.~\ref{fig:1} we plot \(z_{\mathrm{s,eff}}(z_{\mathrm{t}}, y_{\mathrm{g}})\) with varying soft ratio \(\rho\) and temperature \(T\). Solid curves show \(z_{\mathrm{s,eff}}\) solved by Eq.~\ref{eq:effective_logits} of a correct teacher \(y_{\mathrm{g}} = \mathds{1}\{ z_{\mathrm{t}} > 0 \}\), and dashed curves denote a wrong teacher.
We can observe that the presence of hard label creates a split in the shape of \(z_{\mathrm{s,eff}}\), and this split increases as hard label takes higher ratio. The generalization and optimization effect of this split will be discussed in Sec.~\ref{sec:transfer_risk_bound} and \ref{sec:data_inefficiency}.

The temperature $T$ is introduced by Hilton in \cite{hinton2015distilling} to soften teacher's probability. Intuitively \(\sigma'(z) \to 0 \) when \(\sigma(z)\to\{0,1\}\), so a higher \(T\) makes student easily converged.
However, this is only a benefit during training. In Fig.~\ref{fig:1} we show that converged student logits are always bigger than their teachers', \(|z_{\mathrm{s,eff}}|>|z_{\mathrm{t}}|\). We also observed that when \(T>1\), a higher temperature causes the split to be wider, and when \(T<1\), the curve forms a \(T\)-independent plateau. In this paper we are interested in the effect of soft ratio \(\rho\). Therefore, we set temperature to be fixed and follow the convention \(T>1\).

Wide network assumption also allows us to linearize student network with NTK techniques. According to \cite{jacot2018neural,lee2019wide}, the outputs of an infinitely wide network are approximately its linearization at initialization, \(f(x;w_{\mathrm{nlin}}) \approx f(x;w_0) + \Delta_{w}^{\top}\phi(x),\) where \(\phi(x) = \partial_{w}f(x;w_0)\in \mathbb{R}^{ p}\) are called \textit{random features}, \(p\) is the dimension of weight, \(w_{\mathrm{nlin}}\) is the weight of original network  and \(\Delta_{w} = w - w_0 \in \mathbb{R}^p\) is the \textit{weight change} of linear model trained by same strategy as the original network.
The converged weight change \(\Delta_{\hat{w}}\) is used throughout this paper. For training samples of \(\mathbf{X} = [x_1, \cdots, x_n] \in \mathbb{R}^{d\times n}\) and \(\mathbf{z} = [z_1, \cdots, z_n]^{\top}\in \mathbb{R}^{n\times 1}\), the converged weight change is
\begin{equation}
  \label{eq:learned_weight}
  \Delta_{\hat{w}} = \phi(\mathbf{X})(\hat{\Theta}(\mathbf{X},\mathbf{X}))^{-1}\Delta_{\mathbf{z}},
\end{equation}
where \(\hat{\Theta}(\mathbf{X},\mathbf{X})  = \phi(\mathbf{X})^{\mathsf{T}} \phi(\mathbf{X})\) is called the empirical kernel, \(\Delta_{\mathbf{z}} = \mathbf{z} - \mathbf{z}_0\) and \(\mathbf{z}_0 = f(\mathbf{X};w_0)\) are student's logits at initialization.
According to \cite{jacot2018neural,lee2019wide}, in infinite width limit, \(\hat{\Theta}(\mathbf{X},\mathbf{X})\) converges to a weight independent kernel \(\Theta(\mathbf{X},\mathbf{X})\), called NTK.
We assume \(\hat{\Theta}(\mathbf{X},\mathbf{X}) \approx \Theta(\mathbf{X},\mathbf{X})\) throughout this paper and simplify it as \(\Theta_n\).
Eq.~\ref{eq:learned_weight} can be proved by Theorem 1 of \cite{phuong2019towards}. Eq.8 of \cite{lee2019wide} also gives a similar result of \(\ell_2\)-loss. Note that due to the extremely high dimension of \(\phi(\mathbf{X})\), direct calculation of Eq.~\ref{eq:learned_weight} is impractical. We can instead attain \(\Delta_{\hat{w}}\) by training the linear model.

The rest of this paper are organized as follows: In Sec.~\ref{sec:transfer_risk_bound} we give our transfer risk bound of linearized network. This bound is computationally expensive, so in Sec.~\ref{sec:data_inefficiency} we introduce a more direct metric, called data inefficiency. Then we analyze the effect of teacher's early stopping and soft ratio with this metric. Sec.~\ref{sec:transfer_risk_bound} and \ref{sec:data_inefficiency} only consider a perfect teacher and under this setting, hard labels are not beneficial for KD.
Therefore as a complementary, we study the effect of hard labels in imperfect teacher distillation in Sec.~\ref{sec:imperfect}.

\section{\label{sec:transfer_risk_bound}Transfer Risk Bound}

\begin{figure}[htbp]
  \centering
  \includegraphics[width=0.32\textwidth]{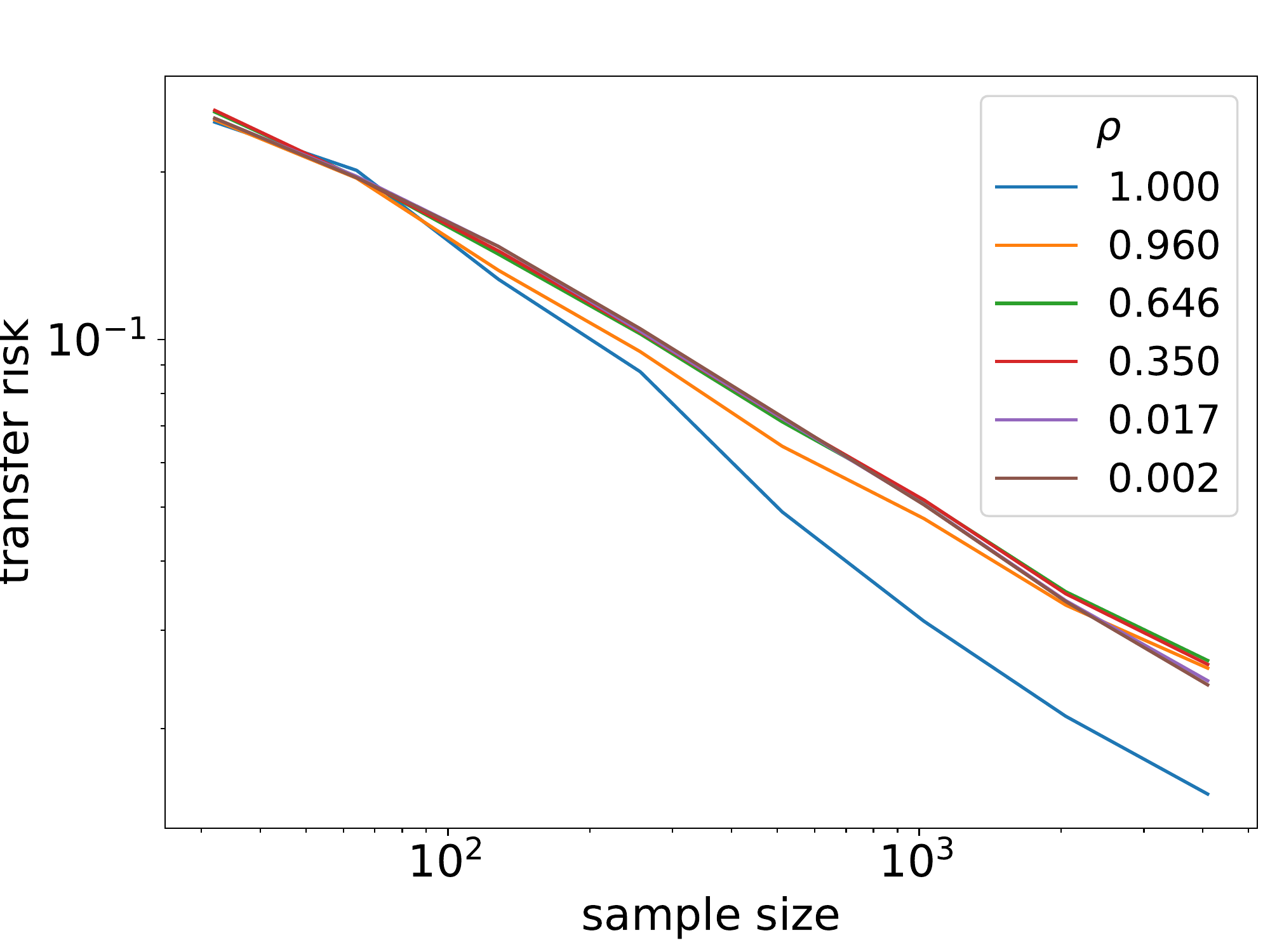}
  \includegraphics[width=0.32\textwidth]{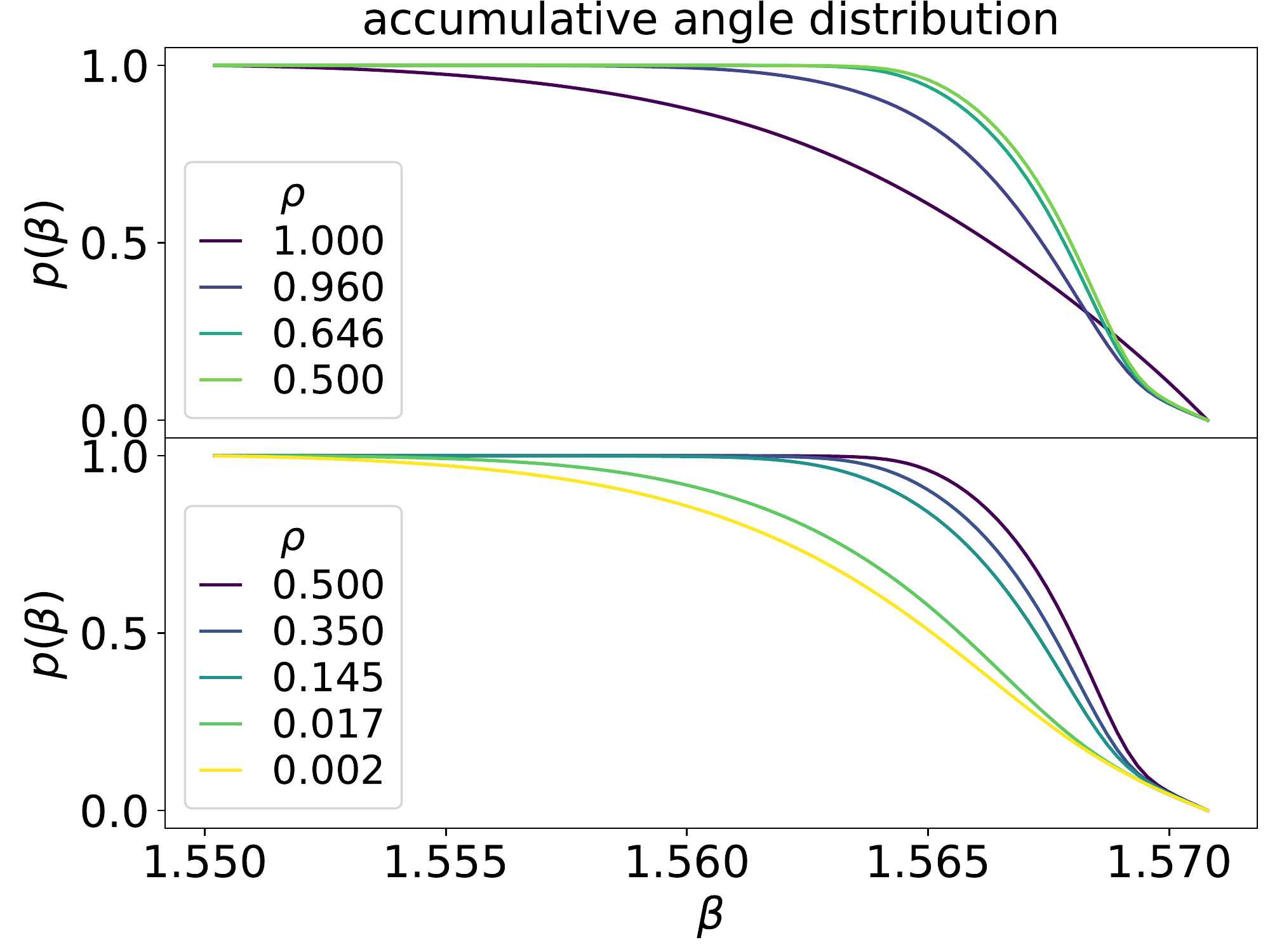}
  \includegraphics[width=0.32\textwidth]{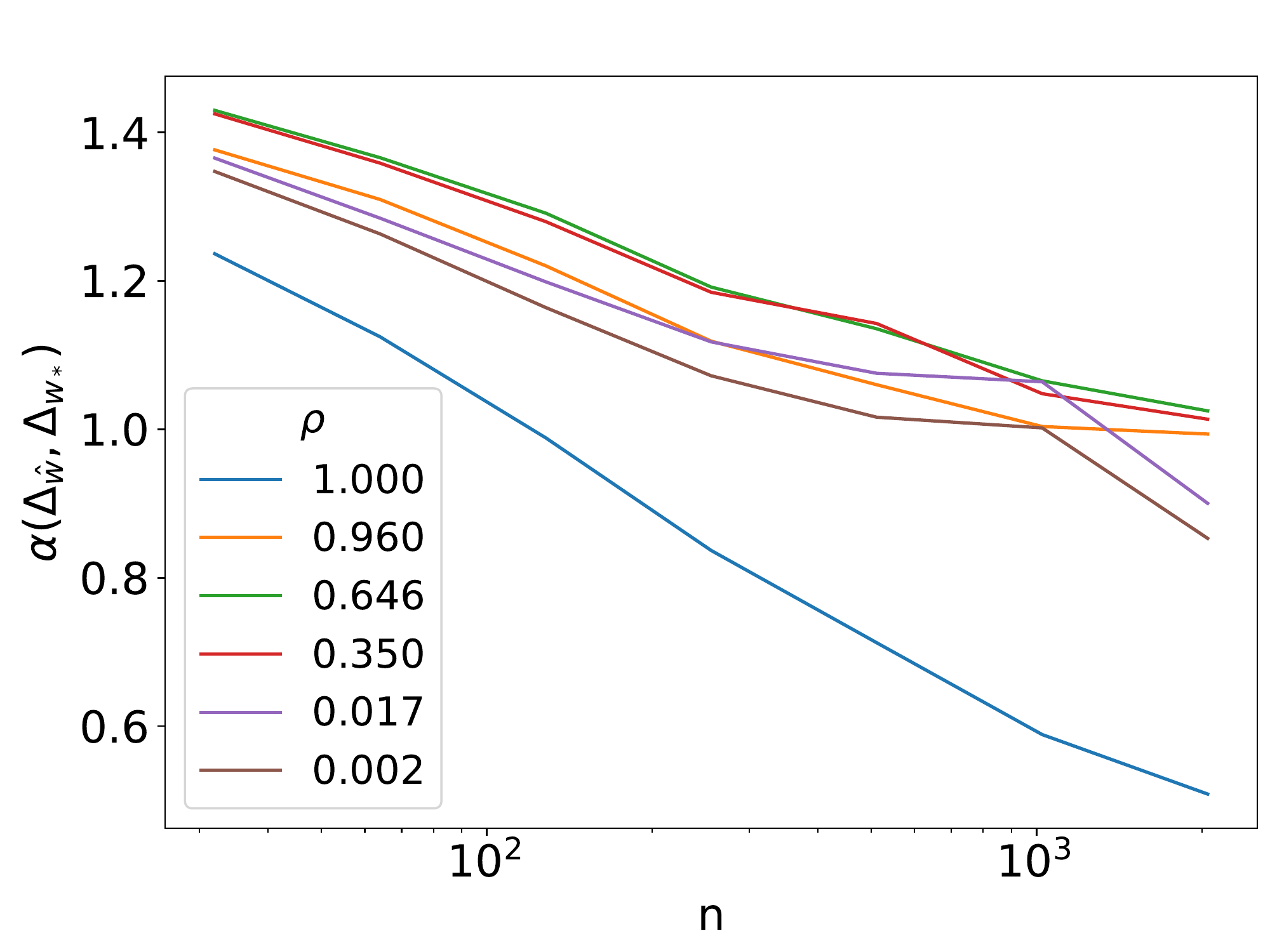}
  \vspace{-0.2cm}
  \caption{\label{fig:2}  \textbf{Left:} Experimental transfer risk, plotted with respect to sample size \(n\). The curve shows a power law relation, with a faster rate for pure soft distillation. \textbf{Middle:} Accumulative angle distribution \(p(\beta)\), as part of our transfer risk bound. We split the curves into two subfigures because they change non-monotonically with respect to \(\rho\). \textbf{Right:}
  \(\alpha_n'=\alpha (\Delta_{\hat{w}},\Delta_{w_*})\) with respect to \(n\). See Sec.~\ref{supp:experiment_detail} in Appendix for details.}
  \vspace{-0.2cm}
\end{figure}

The \textit{transfer risk} is the probability of different prediction from teacher, \(\mathcal{R} =\underset{x\sim P(x)}{\mathbb{P}} [z_{\mathrm{t}}\cdot z_{\mathrm{s}} < 0]\). Before we state our bound, we first present our observation on perfect teacher distillation (Fig.~\ref{fig:2} left).  \(\mathcal{R}\) shows a power relation with respect to sample size \(n\), and pure soft distillation shows a significantly faster converging rate. This power law is also observed in practical distillation of ResNet on CIFAR (see Sec.~\ref{supp:Resnet_vary_datanum} in Appendix). 
The relation of \(\mathcal{R}\) and \(n\) is rarely discussed in deep learning. However in statistical learning, \cite{tsybakov2004optimal,massart2006risk} prove that given some criterion on the logits distribution near class boundary, this rate can approach \(O(1/n)\), instead of  \(O(1/\sqrt{n})\) given by traditional VC theory. Our results show experimental connections to their claim.
We try to explain this with our transfer risk bound. Compared with the bound in \cite{phuong2019towards} for linear distillation, our bound is a tightened version of their results and a modification to linearized network, as elaborated in the following.

We assume the student is expressive enough to approximate effective logits and zero function.
Then \(z_{\mathrm{s,eff}} \approx f(x;w_0) + \Delta_{w_*}^{\top}\phi(x)\) and \(0 \approx f(x;w_0) + \Delta_{w_{\mathrm{z}}}^{\top}\phi(x)\) where \(\Delta_{w_*}\) is \textit{oracle weight change}  and \(\Delta_{w_{\mathrm{z}}}\) is \textit{zero weight change}. This approximation relies on the expressiveness of student, and has no dependency on teacher's weight. The student can be simplified as a bias-free linear model, \(z \approx (\Delta_{w} - \Delta_{w_{\mathrm{z}}})^{\top}\phi(x)\) and transfer risk is \(\mathcal{R}= \underset{x\sim P(x)}{\mathbb{P}}[ (\Delta_{w_*} - \Delta_{w_{\mathrm{z}}})^{\top}\phi(x)\cdot (\Delta_{\hat{w}} - \Delta_{w_{\mathrm{z}}})^{\top}\phi(x)<0 ]\).
Further, student's weight change can be written as a projection onto \(\mathrm{span}\{\phi(\mathbf{X})\}\),
\begin{equation}
  \label{eq:projection}
  \Delta_{\hat{w}} = \phi(\mathbf{X})\Theta_n^{-1}\phi(\mathbf{X})^{\top}\Delta_{w_*} = \mathbf{P}_{\Phi} \Delta_{w_*},
\end{equation}
where \(\mathbf{P}_{\Phi}\) is the projection matrix. We denote \(\bar{\alpha}(a,b) = \cos^{-1}(|x^\top y| / \sqrt{x^\top x \cdot y^\top y}) \in [0,\pi/2]\) as the angle of two vectors \(a,b\). Acute angle is used so that wrong prediction of both classes is counted. Similar to \cite{phuong2019towards}, the \textit{accumulative angle distribution} \(p(\beta)\) of given distribution is defined as
\begin{equation}
  p(\beta) = \underset{x\sim P(x)}{\mathbb{P}}\left[\bar{\alpha}(\phi(x), \Delta_{w_*} - \Delta_{w_{\mathrm{z}}}) > \beta\right],\;\mathrm{for}\;\beta\in[0,\pi/2].
\end{equation}
Now we state our result of transfer risk. The proof is in Sec.~\ref{supp:proof_of_risk_bound} of Appendix.
\begin{theorem} \label{theorem:bound}
  \textbf{(Risk bound)} Given input distribution \(P(x)\), training samples \(\mathbf{X} = [x_1, \cdots, x_n]\), oracle weight change \(\Delta_{w_*}\), zero weight change \(\Delta_{w_{\mathrm{z}}}\) and accumulative angle distribution \(p(\beta)\), the transfer risk is bounded by,
  \begin{equation}
    \mathcal{R}_n \leq p(\frac{\pi}{2} - \bar{\alpha}_n),
  \end{equation}
  where \(\bar{\alpha}_n = \bar{\alpha}(\Delta_{w_*} - \Delta_{w_{\mathrm{z}}}, \Delta_{\hat{w}} - \Delta_{w_{\mathrm{z}}})\) and \(\Delta_{\hat{w}}\) is student's converged weight change.
\end{theorem}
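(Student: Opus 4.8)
The plan is to recast the transfer risk, which is a probability over the high-dimensional random feature $\phi(x)$, as a purely two-dimensional geometric event in the plane $\Pi = \mathrm{span}\{u,v\}$, where I abbreviate $u = \Delta_{w_*} - \Delta_{w_{\mathrm{z}}}$ (oracle direction) and $v = \Delta_{\hat{w}} - \Delta_{w_{\mathrm{z}}}$ (student direction). The mis-transfer event is $\{(u^{\top}\phi(x))\,(v^{\top}\phi(x)) < 0\}$, and since $u,v \in \Pi$ we have $u^{\top}\phi(x) = u^{\top}\phi_{\parallel}$ and $v^{\top}\phi(x) = v^{\top}\phi_{\parallel}$, where $\phi_{\parallel}$ is the orthogonal projection of $\phi(x)$ onto $\Pi$. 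Hence whether the two logits disagree in sign is decided entirely by the in-plane vector $\phi_{\parallel}$, reducing the problem to planar geometry.

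First I would record that the full angle between $u$ and $v$ is itself acute, so that it coincides with $\bar\alpha_n$. This is where the projection structure of Eq.~\ref{eq:projection} enters: reading the learned model in the shifted variable $\Delta_w - \Delta_{w_{\mathrm{z}}}$ turns the oracle target into $u$ and the converged solution into $v = \mathbf{P}_{\Phi} u$, so that $u^{\top}v = \|\mathbf{P}_{\Phi}u\|^2 \ge 0$ and the angle between $u$ and $v$ lies in $[0,\pi/2]$. This rules out the degenerate near-antipodal configuration $v \approx -u$ that would otherwise drive the risk toward $1$, and it is precisely the observation that sharpens the looser estimate of \cite{phuong2019towards}.

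Next comes the wedge computation. Placing $u$ at angle $0$ and $v$ at angle $\bar\alpha_n$ inside $\Pi$ and writing $\phi_{\parallel}$ at angle $\psi$, the disagreement condition $\cos\psi\,\cos(\psi - \bar\alpha_n) < 0$ simplifies, via $\cos\psi\,\cos(\psi-\bar\alpha_n) = \tfrac12[\cos(2\psi-\bar\alpha_n)+\cos\bar\alpha_n]$, to the two opposite wedges lying between the line perpendicular to $u$ and the line perpendicular to $v$. The key quantitative fact to extract is that throughout these wedges the acute angle $\bar\alpha(\phi_{\parallel},u)$ exceeds $\tfrac\pi2 - \bar\alpha_n$, the minimum $\tfrac\pi2 - \bar\alpha_n$ being attained only on the boundary perpendicular to $v$. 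Thus the disagreement event is contained in $\{\bar\alpha(\phi_{\parallel},u) > \tfrac\pi2 - \bar\alpha_n\}$.

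Finally I would lift this planar condition back to the full space using $\|\phi(x)\| \ge \|\phi_{\parallel}\|$ together with $u^{\top}\phi(x) = u^{\top}\phi_{\parallel}$, which gives $\cos\bar\alpha(\phi(x),u) \le \cos\bar\alpha(\phi_{\parallel},u)$ and hence $\bar\alpha(\phi(x),u) \ge \bar\alpha(\phi_{\parallel},u)$. Therefore the mis-transfer event is contained in $\{\bar\alpha(\phi(x),u) > \tfrac\pi2 - \bar\alpha_n\}$, and taking probabilities under $P(x)$ yields $\mathcal{R}_n \le p(\tfrac\pi2 - \bar\alpha_n)$ directly from the definition of the accumulative angle distribution. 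I expect the main obstacle to be the wedge bookkeeping in the third step: correctly identifying which two of the four sectors cut out by the perpendiculars are the disagreement sectors, and verifying that the extremal acute angle over them is measured to $u$ (the direction used in $p(\beta)$) and equals $\tfrac\pi2 - \bar\alpha_n$, rather than to $v$ or to a bisector.
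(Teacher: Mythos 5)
Your argument is correct and proves the theorem, but it is not the paper's route. The paper's proof stays in the full weight space and applies the spherical triangle inequality for angles, \(\alpha(a,b) \leq \alpha(b,c) + \alpha(c,a)\), twice — once for each sign case of \(u^{\top}\phi(x)\), with \(u = \Delta_{w_*} - \Delta_{w_{\mathrm{z}}}\), \(v = \Delta_{\hat{w}} - \Delta_{w_{\mathrm{z}}}\): whenever \(\bar{\alpha}(\phi(x),u) < \tfrac{\pi}{2} - \bar{\alpha}_n\), the triangle inequality forces \(\alpha(\phi(x),v)\) to the same side of \(\tfrac{\pi}{2}\) as \(\alpha(\phi(x),u)\), so every misclassified point must lie in \(\{\bar{\alpha}(\phi(x),u) > \tfrac{\pi}{2} - \bar{\alpha}_n\}\), whose probability is \(p(\tfrac{\pi}{2} - \bar{\alpha}_n)\) by definition. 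You instead reduce to the plane \(\Pi = \mathrm{span}\{u,v\}\), compute the disagreement set exactly (two antipodal wedges of width \(\bar{\alpha}_n\) between the perpendiculars to \(u\) and to \(v\)), check that the acute angle to \(u\) exceeds \(\tfrac{\pi}{2} - \bar{\alpha}_n\) throughout those wedges, and lift back via \(\cos\bar{\alpha}(\phi(x),u) \leq \cos\bar{\alpha}(\phi_{\parallel},u)\), which follows from \(u^{\top}\phi(x) = u^{\top}\phi_{\parallel}\) and \(\|\phi_{\parallel}\|_2 \leq \|\phi(x)\|_2\). The final containment is the same event the paper reaches; your version is more self-contained (no appeal to the angle triangle inequality as a black box) and identifies the disagreement region exactly, which shows precisely where the bound loses tightness; the paper's version is shorter and needs no coordinate bookkeeping. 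Your wedge arithmetic is right: the wedges are \(\psi \in (\tfrac{\pi}{2}, \tfrac{\pi}{2}+\bar{\alpha}_n)\) and its antipode, and on them the acute angle to \(u\) ranges over \((\tfrac{\pi}{2}-\bar{\alpha}_n, \tfrac{\pi}{2})\).

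One caveat: your justification of acuteness overclaims. Eq.~\ref{eq:projection} gives \(\Delta_{\hat{w}} = \mathbf{P}_{\Phi}\Delta_{w_*}\), hence \(v = \mathbf{P}_{\Phi}u - (\mathbf{I}-\mathbf{P}_{\Phi})\Delta_{w_{\mathrm{z}}}\); the identity \(v = \mathbf{P}_{\Phi}u\), and with it \(u^{\top}v = \|\mathbf{P}_{\Phi}u\|_2^2 \geq 0\), would require \(\Delta_{w_{\mathrm{z}}} \in \mathrm{span}\{\phi(\mathbf{X})\}\), i.e., training from the shifted initialization, which is not the actual setup. In general \(u^{\top}v = \|\mathbf{P}_{\Phi}u\|_2^2 - [(\mathbf{I}-\mathbf{P}_{\Phi})u]^{\top}(\mathbf{I}-\mathbf{P}_{\Phi})\Delta_{w_{\mathrm{z}}}\), so acuteness is only an approximation, valid when \(\|\Delta_{w_{\mathrm{z}}}\|_2 \ll \|\Delta_{w_*}\|_2\) — which is exactly the assumption the paper makes explicitly at this step (``we further assume \(\alpha_n \leq \pi/2\)''). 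Restate that step as an assumption justified by the smallness of \(\Delta_{w_{\mathrm{z}}}\), rather than as an exact consequence of the projection, and your proof is complete and on equal rigorous footing with the paper's.
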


\subsection{Estimation of the Bound}

The \(p(\beta)\) and \(\bar{\alpha}_n\) in Theorem.~\ref{theorem:bound} can be estimated with NTK techniques. For \(p(\beta)\), calculation involves sampling over \(\cos \bar{\alpha}(\phi(x), \Delta_{w_*} - \Delta_{w_{\mathrm{z}}})\), and
\begin{equation} \label{eq:cos_approx}
  \cos \bar{\alpha}(\phi(x), \Delta_{w_*} - \Delta_{w_{\mathrm{z}}}) = \frac{( \Delta_{w_*} - \Delta_{w_{\mathrm{z}}})^T\phi(x)}{|| \Delta_{w_*} - \Delta_{w_{\mathrm{z}}}||_2\cdot ||\phi(x)||_2} = \frac{z_{\mathrm{s,eff}}(x)}{|| \Delta_{w_*} - \Delta_{w_{\mathrm{z}}}||_2\cdot \sqrt{\Theta(x,x)}}.
\end{equation}
\(\Delta_{w_*}\) and \(\Delta_{w_\mathrm{z}}\) can be approximated by online-batch SGD training, which is equivalent to training with infinite amount of samples. Fig.~\ref{fig:2} middle shows estimations of \(p(\beta)\) of this method. It shows a non-monotonicity with \(\rho\), but all curves shows a near linear relation with \(\beta\) near \(\beta\to \pi/2\).

For \(\bar{\alpha}_n\), training an actual student is approachable, but we can also approximate it beforehand with a straightforward view. Usually, zero function is much easier to converge than a normal function, and \(||\Delta_{w_{\mathrm{z}}}||_2\ll ||\Delta_{w_*}||_2\) (see Sec.~\ref{supp:zero_function} in Appendix). Then,
\begin{equation}
  \cos \bar{\alpha}_n \approx
  \cos \bar{\alpha}(\Delta_{w_*}, \Delta_{\hat{w}}) = \frac{\Delta_{w_*}^{\top}\mathbf{P}_{\mathbf{\Phi}} \Delta_{w_*}}{||\Delta_{w_*}||_2||\mathbf{P}_{\mathbf{\Phi}} \Delta_{w_*}||_2} = \frac{\sqrt{\Delta_{\mathbf{z}}^{\top} \Theta_n^{-1}\Delta_{\mathbf{z}}}}{||\Delta_{w_*}||_2} = \frac{||\Delta_{\hat{w}}||_2}{||\Delta_{w_*}||_2}.
\end{equation}
Take \(\rho=1\) as an example. From Fig.~\ref{fig:4} right we empirically observe that \(\partial \ln ||\Delta_{\hat{w}}||_2 / \partial \ln n \sim n^{-b}\), where \(b\) is the parameter to describe this relation. By integrating over \(n\), we can get an asymptotic behavior of \(\alpha_n \sim n^{-b/2}\). Then based on the near linearity of \(p(\beta)\) near \(\beta\to \pi/2 \) our result gives a bound of \(O(n^{-b/2})\).
When \(b>1\), this bound outperforms classical bounds. However, we are not certain whether this is the case since \(b\) depends on various hyper-parameters, but we do ﬁnd \(b\) to be bigger when teacher’s stopping epoch is small (i.e. the task is easy).
Note that this estimation requires the existence of a converging \(||\Delta_{\hat{w}}||_2\) with respect to \(n\). These assumptions fail to be satisfied when hard labels are added to the loss (Fig.~\ref{fig:4} right). 
As a complement, in Fig.~\ref{fig:2} right we plot the direct calculation of \(\alpha_n\) of different soft ratio. The result shows the fastest convergence speed in pure soft distillation case. This explains the result of Fig.~\ref{fig:2} left about the fast convergence with pure soft labels.

\subsection{Tightness Analysis}
First we show that the risk bound in \cite{phuong2019towards} is quite loose in high dimensions. Both their results and ours use a property that \(\bar{\alpha}'_n = \alpha(\Delta_{\hat{w}}, \Delta_{w_*})\) monotonically decreases with respect to sample size \(n\) (Lemma.1 in \cite{phuong2019towards}).
However, they utilize this property loosely that \(\bar{\alpha}'_n\) is approximated by \(\alpha\) trained by one sample, \(\bar{\alpha}'_n \leq  \bar{\alpha}'_1 = \bar{\alpha}(\Delta_{w_*}, \phi(x_i))\).
This leads to a risk bound of \(R_n \leq {\min}_\beta\; p(\beta) + p(\pi/2 - \beta)^n\).
Due to the high dimension of random vector  \(\phi(x)\), \(\phi(x)\) and \(\Delta_{w_*} - \Delta_{w_{\mathrm{z}}}\) are very likely to be perpendicular (Fig.~\ref{fig:2} middle).
We can further show (see Sec.~\ref{supp:angle_bound} in Appendix) that for ReLU activation, \(p(\beta) \equiv 1\) strictly for \(\beta \in [0,\beta_t],\beta_t\approx \pi/2\) and therefore their bound is strictly \(\mathcal{R} \equiv 1\).
We tighten their result by directly using \(\alpha'_n\) in risk estimation. The improvement is significant since even if \(\bar{\alpha}'_1 \approx \pi/2\), \(\bar{\alpha}'_n\) can be small when \(n\) is sufficiently large.
However, we have to point out that our bound also shows certain looseness with small sample size due to the fact that \(p(\pi/2 - \alpha_{n_{\mathrm{small}}}) \approx 1\).
The generalization ability of small sample size remains mysterious.

Our approach and \cite{phuong2019towards}  differ from classic learning theory for generalization bound of neural network~\cite{neyshabur2019role,cao2019generalization2,arora2019fine}.
They are based on Rademacher complexity \(\mathfrak{R}\) of hypothesis space \(\mathcal{H}\), and give a bound like \(L_{\mathcal{D},\mathrm{0-1}}(f) \leq 2 \mathfrak{R}_{n}(\ell \circ \mathcal{H})+ O(\sqrt{\ln (1 / \delta)/n })\)
where \(\mathfrak{R}_{n}(\mathcal{H})= {\mathbb{E}}_{\xi \sim\{\pm 1\}^{n}}\left[\sup _{f \in \mathcal{H}} \sum_{i=1}^{n} \xi_{i} f\left(x_{i}\right)\right] / n\).
A common way to tighten this bound is to restrict \(\mathcal{H}\) near the network's initialization, \(\mathcal{H} = \{f(x;w)| w\in \mathbb{R}^p, \mathrm{s.t.} ||w - w_0||_2 \leq B\}\) (\cite{arora2019fine,cao2019generalization,cao2019generalization2}).
However as \cite{zhang2016understanding} shows, the generalization error is highly task related.
Even if \(\mathcal{H}\) is restricted, Rademacher complexity still only captures the generalization ability of the hardest function in \(\mathcal{H}\).
Our approach, instead, tackles directly the the task related network function \(f(x) = f(x;w_0) + \Delta_{w_*}^{\top}\phi(x)\).
Therefore the hypothesis space of our bound is much smaller.

\section{\label{sec:data_inefficiency} Data Inefficiency Analysis}

In the discussion above, \(||\Delta_{\hat{w}}||_2\) plays an important role in the calculation of \(p(\beta)\) and angle convergence. However, \(p(\beta)\) needs much effort to calculate and cannot show the obvious advantage of soft labels.
In this section, we define a metric on the task's training difficulty, called \textit{data inefficiency} to directly measure the change of \(||\Delta_{\hat{w}}||_2\). We first state its rigorous definition and then discuss how the stopping epoch of teacher and soft ratio  affect data inefficiency.

\begin{definition}
\textbf{(Data Inefficiency)}
We introduce data inefficiency as a discrete form of \(\partial \ln ||\Delta_{\hat{w},n}||_2 / \partial \ln n\),
\begin{eqnarray}\label{eq:inefficiency}
  \mathcal{I}(n) = n \left[\ln  \mathbb{E}||\Delta_{\hat{w},n+1}||_2 - \ln  \mathbb{E} ||\Delta_{\hat{w},n}||_2\right]
\end{eqnarray}
where \(||\Delta_{\hat{w},n}||_2 = \sqrt{\Delta_{\mathbf{z}_{n}}^{\top}\Theta_{n}^{-1} \Delta_{\mathbf{z}_{n}}}\) is the norm of student's converged weight change trained by \(n\) samples.
\end{definition}
The expectation is taken over the randomness of samples and student's initialization. Logarithm is used to normalize the scale of \(\Delta_z\), and to reveal the power relation of  \(||\Delta_{\hat{w}}(n)||_2\). We define data inefficiency as Eq.~\ref{eq:inefficiency} for the reasons using the following principle.
\begin{principle}
\(\mathcal{I}(n)\) reveals the student's difficulty of weight recovery from teacher.
\end{principle}
For better explanation, we assume again there exists an oracle weight change \(\Delta_{w_*}\) as we did in Sec.~\ref{sec:transfer_risk_bound}. Then student's weight change is a projection, \(\Delta_{\hat{w}} =  \mathbf{P}_{\Phi} \Delta_{w_{*}}\), where \(\mathbf{P}_{\Phi}\) is a projection matrix onto \(\mathrm{span}\{\phi(\mathbf{X})\}\). As \(n\) increases and \(\mathrm{span}\{\phi(\mathbf{X})\}\) expands, \(\Delta_{\hat{w}}\) gradually recovers to \(\Delta_{w_*}\), and \(||\Delta_{\hat{w}}||_2 = \sqrt{\Delta_{\hat{w}}^\top \mathbf{P}_{\Phi} \Delta_{\hat{w}}}\) shows the stage of recovery.
If the task is data efficient, we can recover the main component of \(\Delta_{w_*}\) with a small sample size, and further we expect \(||\Delta_{\hat{w}}||_2\) not to increase very much with \(n\).
Reversely, if the task is inefficient, the same sample set is not sufficient to recover the main component of \(\Delta_{w_*}\), so we expect \(||\Delta_{\hat{w}}||_2\) to continue increasing.
Therefore, we use \(\mathcal{I}(n)\) to indicate the  increasing speed of \(||\Delta_{\hat{w},n}||_2\) with respect to \(n\) and a faster increasing (or slower converging) \(\mathcal{I}(n)\) indicates a less data efficient task to train.

To demonstrate this principle, we perform two difficulty control tasks of learning Gaussian mixture function 
\(z_{\mathrm{gaussian}}(x) = \sum_{j=1}^{q} A_j \exp(-(x-x_j)^2/\sigma_j^2)\) (see Sec.~\ref{supp:experiment_detail} in Appendix for details).
The difficulty of the first task is controlled by the number of Gaussian modes \(q\) (Fig.~\ref{fig:3} left).
%
In the second task (Fig.~\ref{fig:3} right), we control difficulty by the probability of sign flipping \(z_* = s\times z_{\mathrm{gaussian}}(x)\), where \(s\) has probability of \(\{1-p_{\mathrm{flip}}, p_{\mathrm{flip}}\}\) to be \(\{1, -1\}\).
Both experiments show that \(\mathcal{I}(n)\) ranges as the order of difficulties, which agrees with our idea on data inefficiency.

\begin{figure}[t!]
  \centering
  \includegraphics[width=0.32\textwidth]{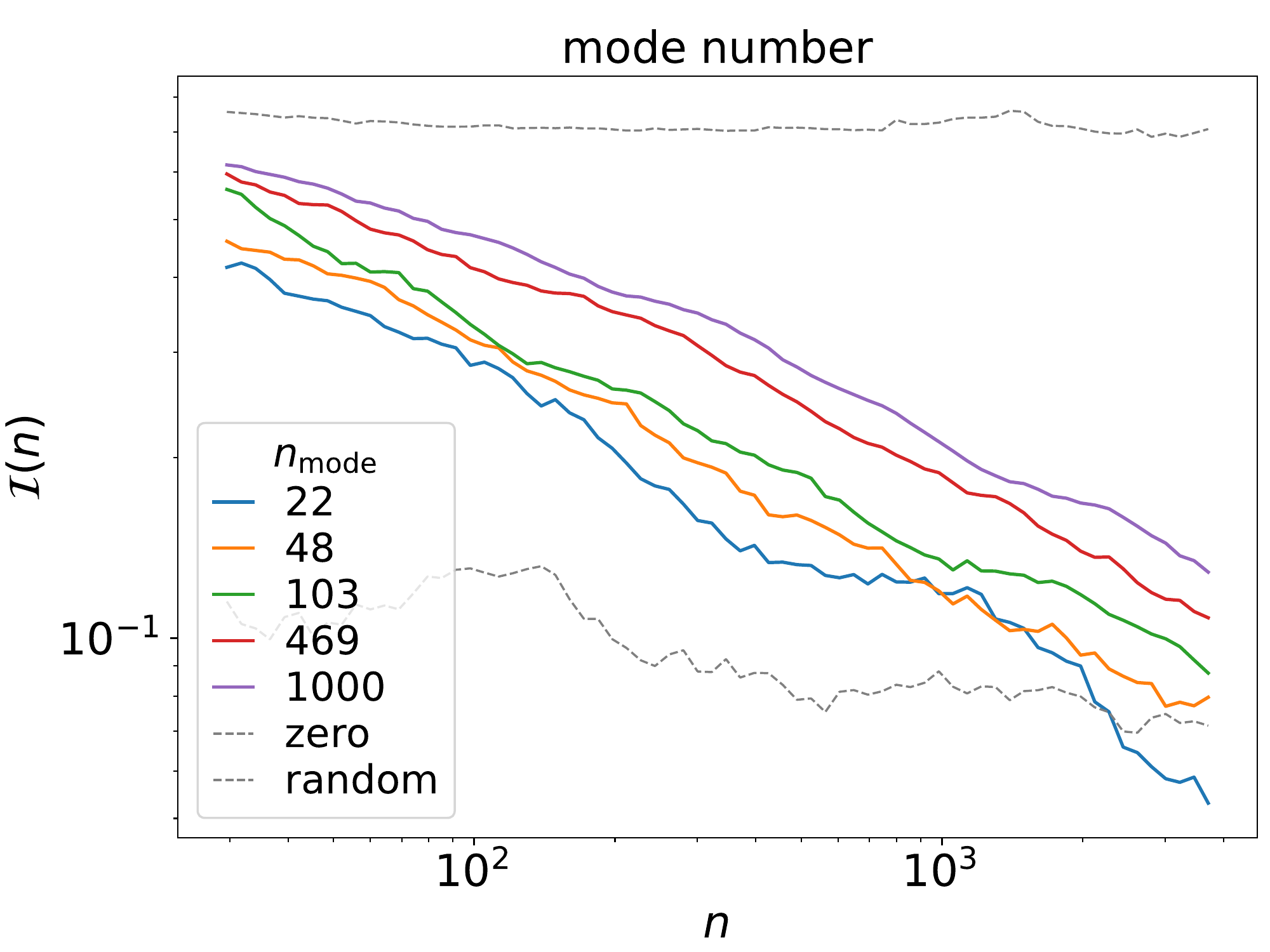}
  \includegraphics[width=0.32\textwidth]{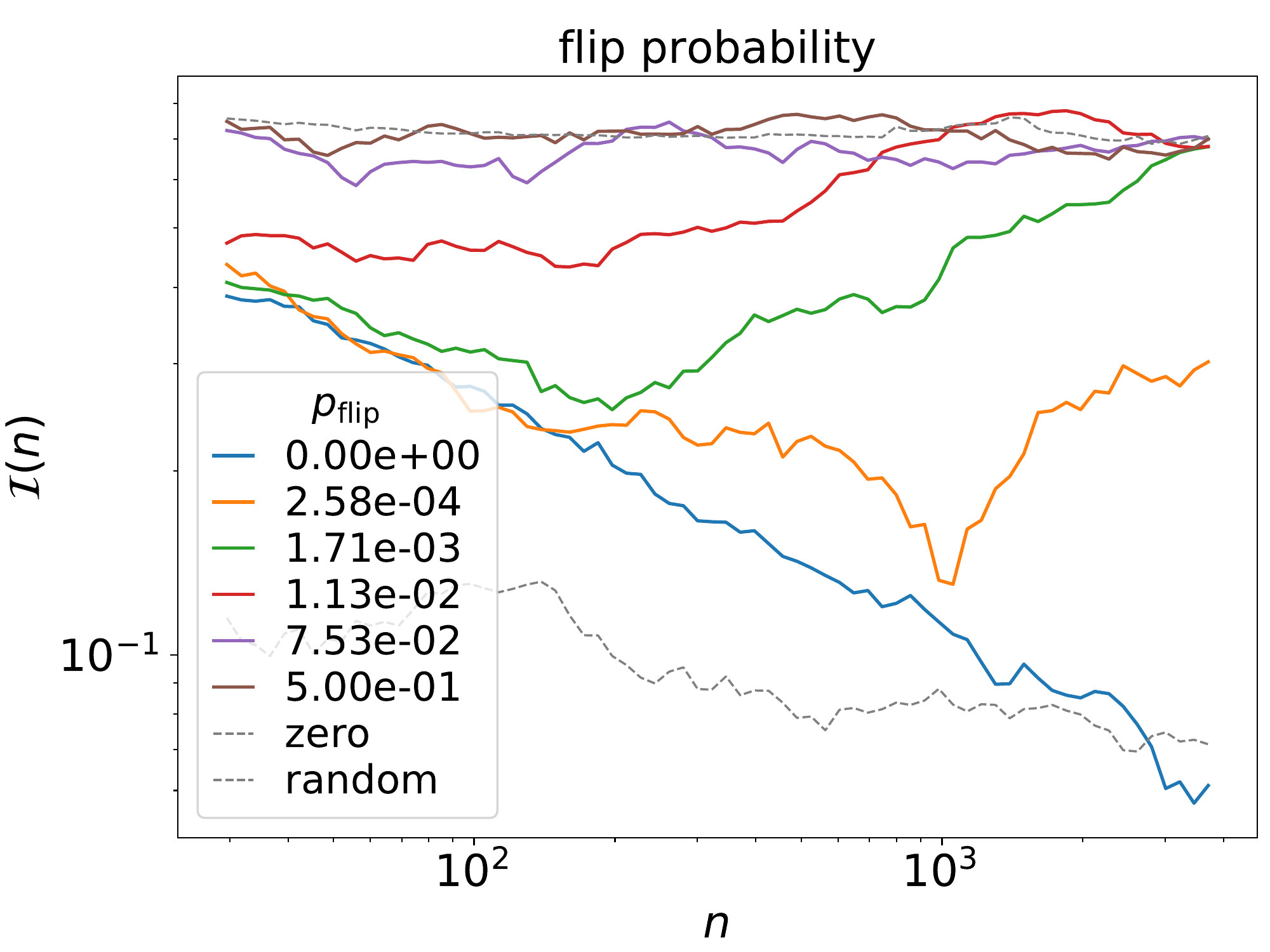}
  \vspace{-0.2cm}
  \caption{\label{fig:3}
    \textbf{Left:} Difficulty control on the number of modes. The figure shows \(\mathcal{I}(n)\) of learning different Gaussian mixture function.
    The decreasing behavior of \(\mathcal{I}(n)\) is typical for learning a noise-free smooth function.
    \textbf{Right:} Difficulty control on flip probability. The figure plots \(\mathcal{I}(n)\)  of learning the same function with different noise level. \(p_{\mathrm{flip}}=0.5\) means a completely random sign.
    The noise makes these tasks so difficult to learn that \(\mathcal{I}(n)\equiv 0.8\), this means \(\Delta_{\hat{w}}\) will not converge. The two figures demonstrate a positive correlation between \(\mathcal{I}(n)\) and task's difficulty.
    The dashed lines are references of a hard and easy task. The upper dashed line shows the complexity of random label \(\Delta_{z} \sim \mathcal{N}(0,1)\), while the lower dashed line shows the complexity of zero function \(z \equiv 0\). The later one also demonstrates that zero function is extremely easy to learn and \(\Delta_{w_{\mathrm{z}}}\) can be neglected. All the results are based on the average of 20 runs.
  }
    \vspace{-0.2cm}
\end{figure}

Our idea is also supported by other works. \cite{arora2019fine} proves that for 2-layer over-parameterized network trained with \(\ell_2\)-loss, \(\sqrt{\Delta_{\mathbf{z}}^{\top}\Theta_n^{-1} \Delta_{\mathbf{z}} / n}\) is a generalization bound for the global minima. \cite{cao2019generalization} studies \(L\)-layer over-parameterized network trained by online-SGD and get a similar bound of \(\widetilde{\mathcal{O}}[L \cdot \sqrt{\Delta_{\mathbf{z}}^{\top}\Theta_n^{-1} \Delta_{\mathbf{z}} / n}]\).
A slower increasing \(\sqrt{\Delta_\mathbf{z}^{\top}\Theta_n^{-1} \Delta_\mathbf{z}}\) means a faster decreasing generalization error, which also means the task is more data efficient.
\cite{ronen2019convergence} supports our idea from the perspective of optimization. They calculate \(\Theta(x,x)\)'s eigensystem in the case of uniform distribution on \(S^d\) and find that convergence time of learning the \(i\)th eigen function with \(\ell_2\)-loss is proportional to \(\lambda_i(\Theta)^{-1} \propto \mathbf{z}^{\top}\Theta_n^{-1} \mathbf{z}\).

\subsection{Early Stopping Improves Data Efficiency}

\begin{figure}[ht]
  \centering
  \includegraphics[width=0.32\textwidth]{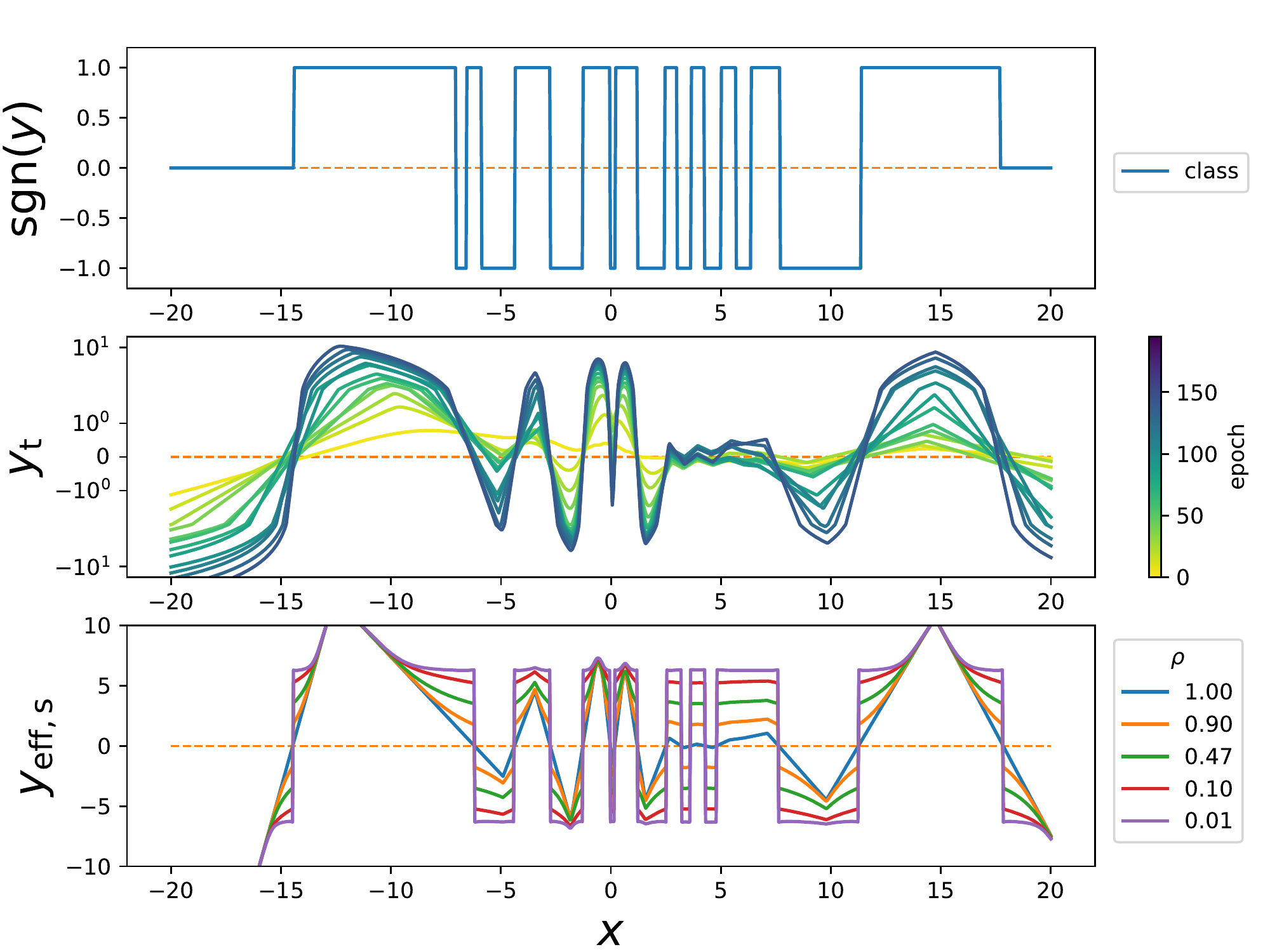}
  \includegraphics[width=0.32\textwidth]{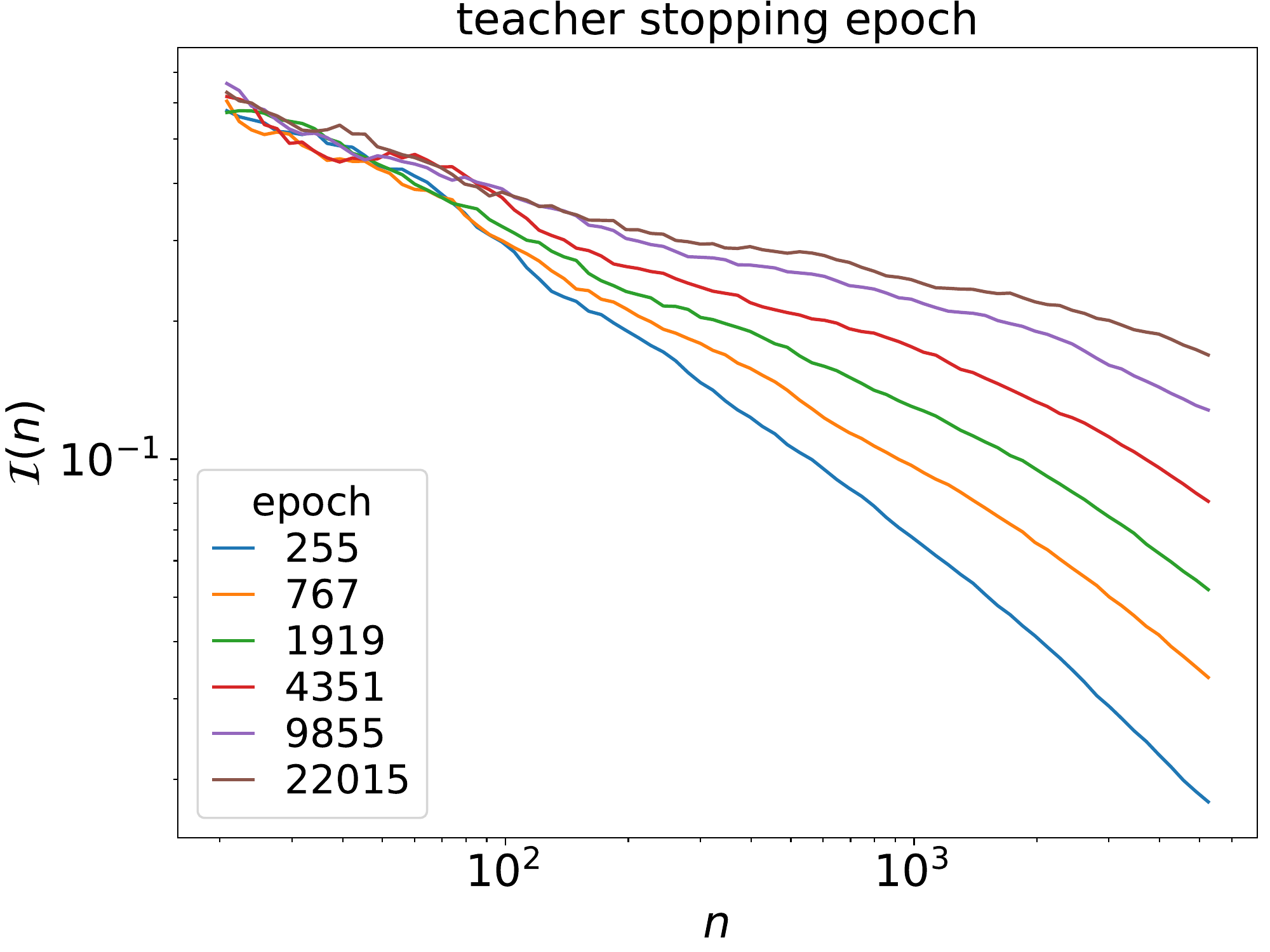}
  \includegraphics[width=0.32\textwidth]{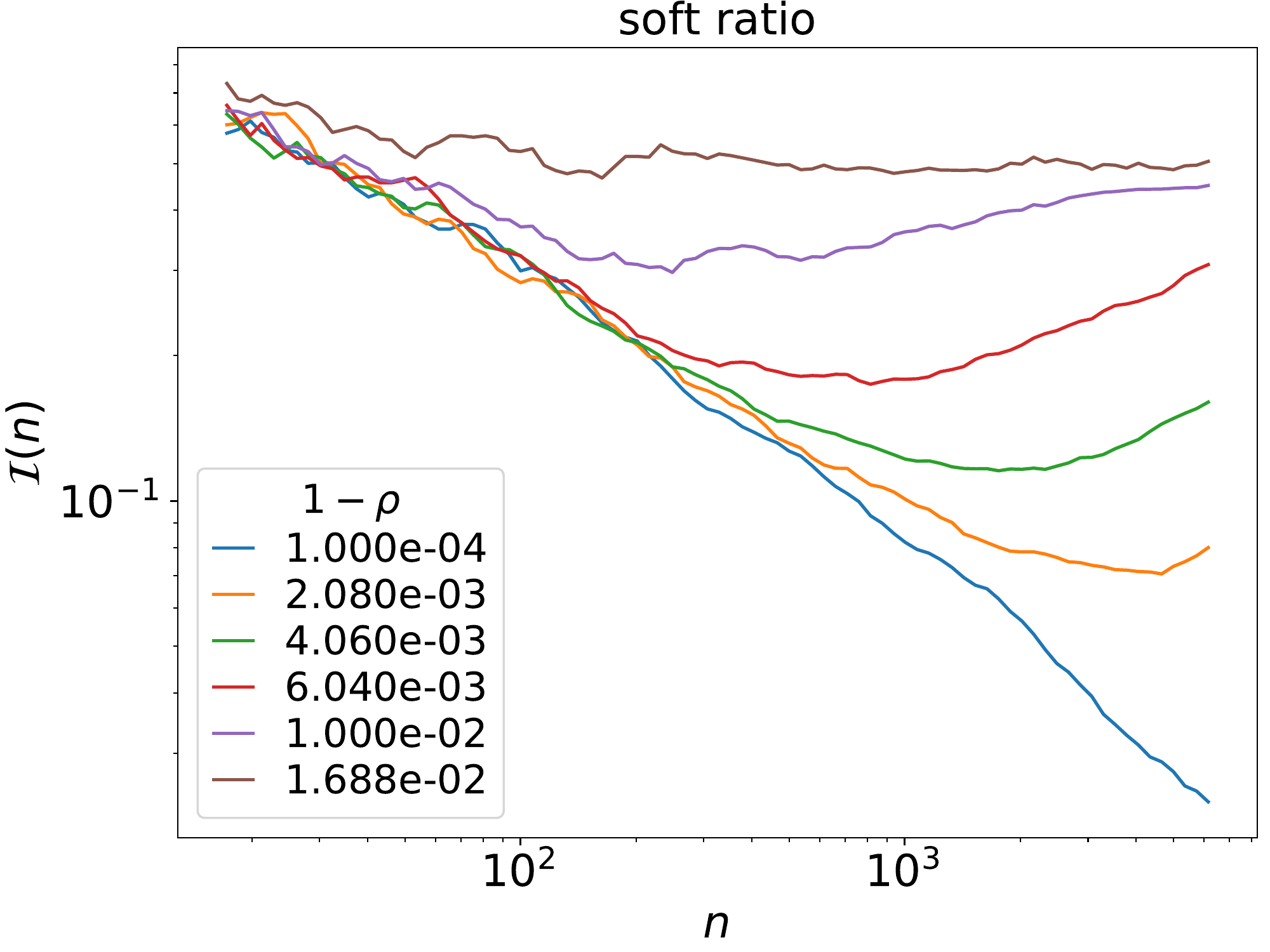}
  \vspace{-0.2cm}
  \caption{
    \label{fig:4}
    \textbf{Left:} A 1-D example of teacher and student output. \textbf{Left Top}: Ground truth class boundary. \textbf{Left Middle}: Teacher's logits at different stopping epoch. The scale of teacher increases and shape becomes more detailed while training. \textbf{Left Bottom}:  Effective student logits \(y_{\mathrm{s,eff}}\) at different soft ratio \(\rho\). This figure further illustrates the discontinuity in \(y_{\mathrm{s,eff}}\). For a small \(\rho\), students shape shows a clear similarity with label smoothing. All student share a same teacher network. 
    \textbf{Middle and Right:} Data inefficiency curve of different teacher stopping epoch and soft ratio.
    It shows that adding hard labels to distillation increases sample complexity. 
    See Sec.~\ref{supp:experiment_detail} for details.
  }
  \vspace{-0.2cm}
\end{figure}

It is a well-known result that neural network first learns the low frequency components of the target function~\cite{xu2019frequency}. Therefore, early stopping the teacher provides the student with a simplified version of the target function, which may be easier to train. For the integrity of this work, we show in Fig.~\ref{fig:4} middle that early stopping improves the data efficiency.

\cite{cho2019efficacy} experimentally observes that a more accurate teacher may not teach a better student, and early stopping the teacher helps the student to fit the data better.  A recent work \cite{dong2019distillation}  also emphasizes the importance of early stopping in KD theoretically. They assume a probability of data corruption and treat the high frequency component in teacher's label as a result of noise. They claim that early stopping prevents teacher from learning the noise in its label.

\subsection{\label{subsec:hard_label_complexity}Hard labels Increase Data Inefficiency}

In this section we study the splitting effect of training with adding hard labels. Fig.~\ref{fig:4} right shows \(\mathcal{I}(n)\) with respect to different soft ratio \(\rho\).
It also shows a transition from a decreasing curve to constant \(\mathcal{I}(n)\approx 0.5\). This demonstrates that hard labels increase data inefficiency.
This can be explained that discontinuity increases the higher frequency component of the \(z_{\mathrm{s,eff}}(x)\)'s shape, so the shape becomes harder to fit.
The constant of \(\mathcal{I}(n)\) in pure hard label training is lower than that of total random logits (Fig.~\ref{fig:3}). This suggests that learning a discontinuous function is still easier than a random function.

\textbf{Connection with label smoothing.} KD resembles \textit{label smoothing} (LS) technique introduced in \cite{szegedy2016rethinking}.
LS can be regarded as replacing a real teacher with a uniform distribution \(P_{k}=1/K\) and \(y_{\mathrm{LS}} = \epsilon/K + (1-\epsilon)y_{\mathrm{g}}\),
in order to limit student's prediction confidence without the effort of training a teacher.
In our setting of \(K=2\), \(z_{\mathrm{LS,eff}} = a\cdot\mathrm{sgn}(y_{\mathrm{g}})\), where \(a = \log(2/\epsilon - 1)\).
In KD, similarly, student effective logit is approximately a cut-off function (Fig.~\ref{fig:4} left bottom) \(z_{\mathrm{s,eff}} \approx \mathrm{sgn}(z_{\mathrm{t}})\cdot\max\{t, |z_{\mathrm{t}}|\}\),
where the threshold is \(t = z_{\mathrm{s,eff}}(0_+,1)\).
As the soft ratio \(\rho\to 0\), \(t\) exceeds \(\max \{|z_{\mathrm{t}}|\}\) and student's logits tend to a sign function \(z_{\mathrm{s,eff}} \to t\cdot\mathrm{sgn}(z_{\mathrm{g}})\).
Therefore in this limit KD tends to be LS. Results in Fig.~\ref{fig:4} right suggest that LS is not a easy task compared with pure soft KD.
We have to point out the difference between two methods. LS is introduced to  upper limit student's logits \(\max\{|z_{\mathrm{LS}}|\} < a\), while on the contrary, in KD, student's logits are lower limited by a threshold, as we see in Fig.~\ref{fig:4} left bottom.

To conclude, the benefit of KD is that teacher provides student with smoothened output function that is easier to train than with hard labels. This function is easier to fit when teacher is earlier stopped, but at a cost of losing details on class boundary. Adding hard labels damages the smoothness and therefore makes the task more difficult.

\section{\label{sec:imperfect}Case of Imperfect Teacher}

We emphasize that in previous discussion the teacher is assumed to be perfect and teacher's mistake is not considered.
Sec.~\ref{sec:transfer_risk_bound} discusses transfer risk instead of generalization error and in Sec.~\ref{sec:data_inefficiency} we study the difficulty of fitting teacher's output.
In both sections the result shows a favor of pure soft label distillation.
However, according to empirical observations in the original paper of KD \cite{hinton2015distilling}, a small portion of hard labels are required to achieve better generalization error than pure soft ones.
KD on practical datasets and network structure (Fig.~\ref{fig:5} left) also demonstrates this claim.
In this section we analyze benefit of hard label in imperfect teacher distillation.

\textbf{Hard labels correct student's logits at each point.}
The dashed lines in Fig.~\ref{fig:1} left shows \(z_{\mathrm{s,eff}}\) when teacher makes a wrong prediction \(y_{\mathrm{g}}\neq \mathds{1}\{z_{\mathrm{t}} > 0 \}\). From the figure we can see that \(z_{\mathrm{s,eff}}\) moves closer to the correct direction.
To be more precise, we take \(y_{\mathrm{g}} = 1, T = 1.0\) case as an example and solve \(z_{\mathrm{s,eff}}\) according to Eq.~\ref{eq:effective_logits}. No matter how wrong the teacher is,  the existence of hard label restrict the wrongness of student to \(\sigma(z_{\mathrm{s,eff}}) \geq 1 - \rho\). Further, if \(\rho \leq 1/2\), \(z_{\mathrm{s,eff}}\) is always correct. Therefore hard labels can pointwisely alleviate the wrongness of student's logits.

\textbf{Hard labels guide student's weight to correct direction.}
In this section we again assume three oracle weight change of ground truth \(\Delta_{w_\mathrm{g}}\), teacher \( \Delta_{w_{\mathrm{t}}}\) and student \(\Delta_{w_{\mathrm{s,eff}}}\) to express \(z_\mathrm{g}, z_{\mathrm{t}}\) and \(z_{\mathrm{s,eff}}(z_{\mathrm{t}}, \mathrm{sgn}(z_\mathrm{g}))\) (We switch the expression \(\Delta_{w_*}\) to \(\Delta_{w_\mathrm{g}} \) to better distinguish ground truth, teacher and student).
Then student learns a projected weight change of \(\Delta_{\hat{w}} = \mathbf{P}_{\mathbf{\Phi}}\Delta_{w_{\mathrm{s,eff}}}\). For simplicity, we denote \(\mathbf{a}^{\top}\Theta_n^{-1}\mathbf{b} = \langle \mathbf{a}, \mathbf{b} \rangle_{\Theta_n}\) and \(\mathbf{a}^{\top}\mathbf{b} = \langle \mathbf{a}, \mathbf{b} \rangle\).

As we show in Sec.~\ref{sec:transfer_risk_bound}, student with a smaller \(\alpha (\Delta_{\hat{w}}, \Delta_{w_g})\) generalizes better. Therefore we consider the cosine of student and oracle weight change,
\begin{equation}
  \cos \alpha (\Delta_{\hat{w}}, \Delta_{w_\mathrm{g}}) = \frac{\Delta_{w_\mathrm{g}}^{\top} \Delta_{\hat{w}}}{||\Delta_{w_\mathrm{g}}||_2 \sqrt{\Delta_{\hat{w}}^{\top} \Delta_{\hat{w}}}} = \frac{ \langle \Delta_{\mathbf{z}_{\mathrm{g}}}, \Delta_{\mathbf{z}_{\mathrm{s,eff}} } \rangle_{\Theta_n}}{||\Delta_{w_\mathrm{g}}||_2 \sqrt{\langle \Delta_{\mathbf{z}_{\mathrm{s,eff}}}, \Delta_{\mathbf{z}_{\mathrm{s,eff}}} \rangle_{\Theta_n}}} .
\end{equation}
If hard labels are beneficial, then we expect \(\cos \alpha (\Delta_{\hat{w}}, \Delta_{w_\mathrm{g}})\) to increases with respect to \(1-\rho\). 
The dependency of \(\Delta_{w_{\mathrm{s,eff}}}\) on \(\Delta_{w_\mathrm{g}}\) and \(\Delta_{w_{\mathrm{t}}}\) is implicit, so we only consider the effect of hard labels near pure soft  distillation (\(\rho \to 1\)) from an imperfect teacher.
Then the change of \(\cos \alpha (\Delta_{\hat{w}}, \Delta_{w_\mathrm{g}})\) with respect to hard label can be approximated by linear expansion, as summarized by the following theorem.
\begin{theorem}
\textbf{(Effect of Hard Labels)}
We introduce correction logits \(\delta z_{\mathrm{h}}\) to approximate \(z_{\mathrm{s,eff}}\) solved by Eq.~\ref{eq:effective_logits} as a linear combination \(z_{\mathrm{s,eff}} \approx z_{\mathrm{t}} + (1-\rho)\delta z_{\mathrm{h}}\) in the limit of \(\rho \to 1\).
Then the derivative of \( \cos \alpha(\Delta_{\hat{w}}, \Delta_{w_\mathrm{g}})\) with respect to hard ratio \(1-\rho\) is,
\begin{equation}
\label{eq:angle_derivative}
\begin{split}
    \frac{\partial \cos \alpha(\Delta_{\hat{w}}, \Delta_{w_\mathrm{g}})}{ \partial (1- \rho)} \Bigg|_{\rho = 1} = \;\;\;
  &\frac{1}{ || \Delta_{w_\mathrm{g}}||_2 \sqrt{ \langle \Delta_{\mathbf{z}_{\mathrm{t}}}, \Delta_{\mathbf{z}_{\mathrm{t}}} \rangle_{\Theta_n} }} 
 \times \\
 &\left( \langle \Delta_{\mathbf{z}_{\mathrm{g}}}, \delta \mathbf{z}_{\mathrm{h}} \rangle_{\Theta_n} - \frac{\langle \Delta_{\mathbf{z}_{\mathrm{g}}},  \Delta_{\mathbf{z}_{\mathrm{t}}} \rangle_{\Theta_n} }{\langle \Delta_{\mathbf{z}_{\mathrm{t}}},  \Delta_{\mathbf{z}_{\mathrm{t}}} \rangle_{\Theta_n} } \langle \Delta_{\mathbf{z}_{\mathrm{t}}}, \delta \mathbf{z}_{\mathrm{h}} \rangle_{\Theta_n} \right).
\end{split}
\end{equation}
\end{theorem}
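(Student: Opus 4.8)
The plan is to recognize that this theorem is, at heart, a first-order Taylor expansion of a normalized inner product, so the entire argument reduces to a chain-rule computation once the right quantities are identified as functions of $\epsilon := 1-\rho$. First I would fix the base point. Setting $\rho=1$ in Eq.~\ref{eq:effective_logits} forces $\sigma(\hat z_\mathrm{s}/T) = \sigma(z_\mathrm{t}/T)$, hence $z_{\mathrm{s,eff}} = z_\mathrm{t}$ and therefore $\Delta_{\mathbf{z}_{\mathrm{s,eff}}}|_{\rho=1} = \Delta_{\mathbf{z}_\mathrm{t}}$ on the training set. Combined with the defining expansion $z_{\mathrm{s,eff}} \approx z_\mathrm{t} + \epsilon\,\delta z_\mathrm{h}$, this supplies the two facts I need: the value of $\Delta_{\mathbf{z}_{\mathrm{s,eff}}}$ at $\epsilon=0$ and its derivative $\partial_\epsilon \Delta_{\mathbf{z}_{\mathrm{s,eff}}}|_{\epsilon=0} = \delta\mathbf{z}_\mathrm{h}$. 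If desired, $\delta z_\mathrm{h}$ can be made explicit by implicit differentiation of Eq.~\ref{eq:effective_logits}, but the statement only uses the existence of this expansion.

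Next I would translate the weight-space cosine into the kernel inner product $\langle\cdot,\cdot\rangle_{\Theta_n}$. Using $\Delta_{\hat w} = \mathbf{P}_\Phi \Delta_{w_{\mathrm{s,eff}}} = \phi(\mathbf{X})\Theta_n^{-1}\Delta_{\mathbf{z}_{\mathrm{s,eff}}}$ together with $\Delta_{\mathbf{z}_{\mathrm{g}}} = \phi(\mathbf{X})^\top\Delta_{w_\mathrm{g}}$ and $\phi(\mathbf{X})^\top\phi(\mathbf{X}) = \Theta_n$, the numerator becomes $\Delta_{w_\mathrm{g}}^\top\Delta_{\hat w} = \langle\Delta_{\mathbf{z}_{\mathrm{g}}}, \Delta_{\mathbf{z}_{\mathrm{s,eff}}}\rangle_{\Theta_n}$ and the student norm becomes $\Delta_{\hat w}^\top\Delta_{\hat w} = \langle\Delta_{\mathbf{z}_{\mathrm{s,eff}}}, \Delta_{\mathbf{z}_{\mathrm{s,eff}}}\rangle_{\Theta_n}$, reproducing the cosine formula stated just before the theorem. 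Crucially, $||\Delta_{w_\mathrm{g}}||_2$ is the ground-truth oracle norm and carries no dependence on $\rho$, so it factors out of the derivative.

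The core step is then purely the differentiation of $g(\epsilon) = A(\epsilon)/\sqrt{B(\epsilon)}$, where $A = \langle\Delta_{\mathbf{z}_{\mathrm{g}}}, v\rangle_{\Theta_n}$, $B = \langle v, v\rangle_{\Theta_n}$, and $v := \Delta_{\mathbf{z}_{\mathrm{s,eff}}}$. Since $\langle\cdot,\cdot\rangle_{\Theta_n}$ is bilinear and $\partial_\epsilon v = \delta\mathbf{z}_\mathrm{h}$, I obtain $A' = \langle\Delta_{\mathbf{z}_{\mathrm{g}}}, \delta\mathbf{z}_\mathrm{h}\rangle_{\Theta_n}$ and $B' = 2\langle v, \delta\mathbf{z}_\mathrm{h}\rangle_{\Theta_n}$, so $g' = A'/\sqrt{B} - A B'/(2B^{3/2})$. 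Evaluating at $\epsilon=0$, where $v = \Delta_{\mathbf{z}_\mathrm{t}}$, and dividing by the constant $||\Delta_{w_\mathrm{g}}||_2$ yields exactly Eq.~\ref{eq:angle_derivative}: the term $A'/\sqrt{B}$ produces $\langle\Delta_{\mathbf{z}_{\mathrm{g}}}, \delta\mathbf{z}_\mathrm{h}\rangle_{\Theta_n}$ and the term $-AB'/(2B^{3/2})$ produces $-\tfrac{\langle\Delta_{\mathbf{z}_{\mathrm{g}}}, \Delta_{\mathbf{z}_\mathrm{t}}\rangle_{\Theta_n}}{\langle\Delta_{\mathbf{z}_\mathrm{t}}, \Delta_{\mathbf{z}_\mathrm{t}}\rangle_{\Theta_n}}\langle\Delta_{\mathbf{z}_\mathrm{t}}, \delta\mathbf{z}_\mathrm{h}\rangle_{\Theta_n}$, both scaled by $1/(||\Delta_{w_\mathrm{g}}||_2\sqrt{\langle\Delta_{\mathbf{z}_\mathrm{t}}, \Delta_{\mathbf{z}_\mathrm{t}}\rangle_{\Theta_n}})$.

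The computation itself is routine; the only genuine subtlety, and the part I would be most careful about, is justifying the linearization. I would need the implicit solution $z_{\mathrm{s,eff}}(\rho)$ to be differentiable at $\rho=1$ so that the first-order expansion is legitimate and the $O(\epsilon^2)$ remainder does not contribute to the derivative. This follows from the implicit function theorem applied to Eq.~\ref{eq:effective_logits}, whose $\hat z_\mathrm{s}$-derivative $\tfrac{\rho}{T^2}\sigma'(\hat z_\mathrm{s}/T) + (1-\rho)\sigma'(\hat z_\mathrm{s})$ is strictly positive, but it is worth stating explicitly since everything downstream, particularly that $\partial_\epsilon v|_{\epsilon=0}$ equals precisely $\delta\mathbf{z}_\mathrm{h}$ with no correction, rests on it.
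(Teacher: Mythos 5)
Your proposal is correct and follows essentially the same route as the paper's proof: both linearize \(\Delta_{\mathbf{z}_{\mathrm{s,eff}}}\) as \(\Delta_{\mathbf{z}_{\mathrm{t}}} + (1-\rho)\delta\mathbf{z}_{\mathrm{h}}\) and perform the first-order expansion of the normalized inner product \(\langle \Delta_{\mathbf{z}_{\mathrm{g}}}, \Delta_{\mathbf{z}_{\mathrm{s,eff}}}\rangle_{\Theta_n}/\sqrt{\langle \Delta_{\mathbf{z}_{\mathrm{s,eff}}}, \Delta_{\mathbf{z}_{\mathrm{s,eff}}}\rangle_{\Theta_n}}\), which is exactly your quotient-rule computation of \(A(\epsilon)/\sqrt{B(\epsilon)}\) at \(\epsilon = 0\). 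Your added justifications (verifying \(z_{\mathrm{s,eff}}=z_{\mathrm{t}}\) at \(\rho=1\) and invoking the implicit function theorem for differentiability) are refinements the paper omits, not a different argument.
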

The sign of this derivative indicates whether hard label benefits or not. The expression in the parentheses has an intuitive geometric meaning.
It can be written as a projection \(\langle \delta_{\hat{w}_\mathrm{h}}, \Delta_{\hat{w}_{\mathrm{c}}} \rangle\),
where \(\delta_{\hat{w}_\mathrm{h}} = \phi(\mathbf{X})\Theta_n^{-1} \delta z_{\mathrm{h}}\) is the change of student's weight by adding hard labels,
and \(\Delta_{\hat{w}_{\mathrm{c}}} = \Delta_{\hat{w}_{\mathrm{g}}} - \Delta_{\hat{w}_{\mathrm{t}}}\frac{\langle \Delta_{\hat{w}_{\mathrm{t}}}, \Delta_{\hat{w}_{\mathrm{g}}}\rangle}{\langle \Delta_{\hat{w}_{\mathrm{t}}}, \Delta_{\hat{w}_{\mathrm{t}}}\rangle}\) is the orthogonal complement of \(\Delta_{\hat{w}_{\mathrm{t}}} = \phi(\mathbf{X})\Theta_n^{-1}\Delta \mathbf{z}_{\mathrm{t}}\) with respect to \(\Delta_{\hat{w}_{\mathrm{g}}} = \phi(\mathbf{X})\Theta_n^{-1}\Delta \mathbf{z}_{\mathrm{g}}\).
\(\Delta_{\hat{w}_{\mathrm{c}}} \) tells the correct direction where  the student can improve most. Therefore, the sign of this projection \(\langle \delta_{\hat{w}_\mathrm{h}}, \Delta_{\hat{w}_{\mathrm{c}}} \rangle\) shows whether hard labels can lead \(\Delta_{\hat{w}}\) to or against the correct direction.

\begin{figure}[t!]
  \centering
  \includegraphics[width=0.32\textwidth]{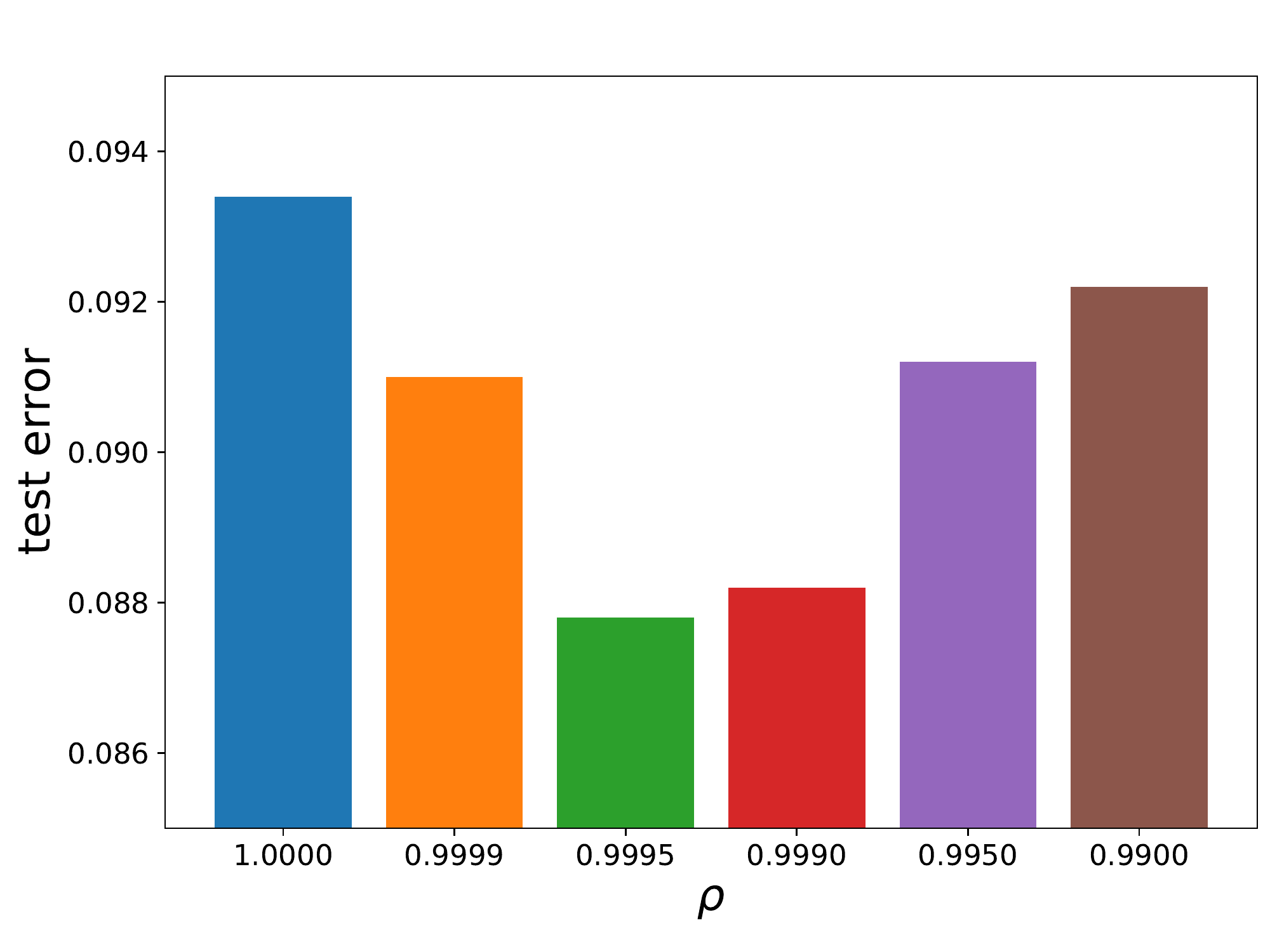}
  \includegraphics[width=0.32\textwidth]{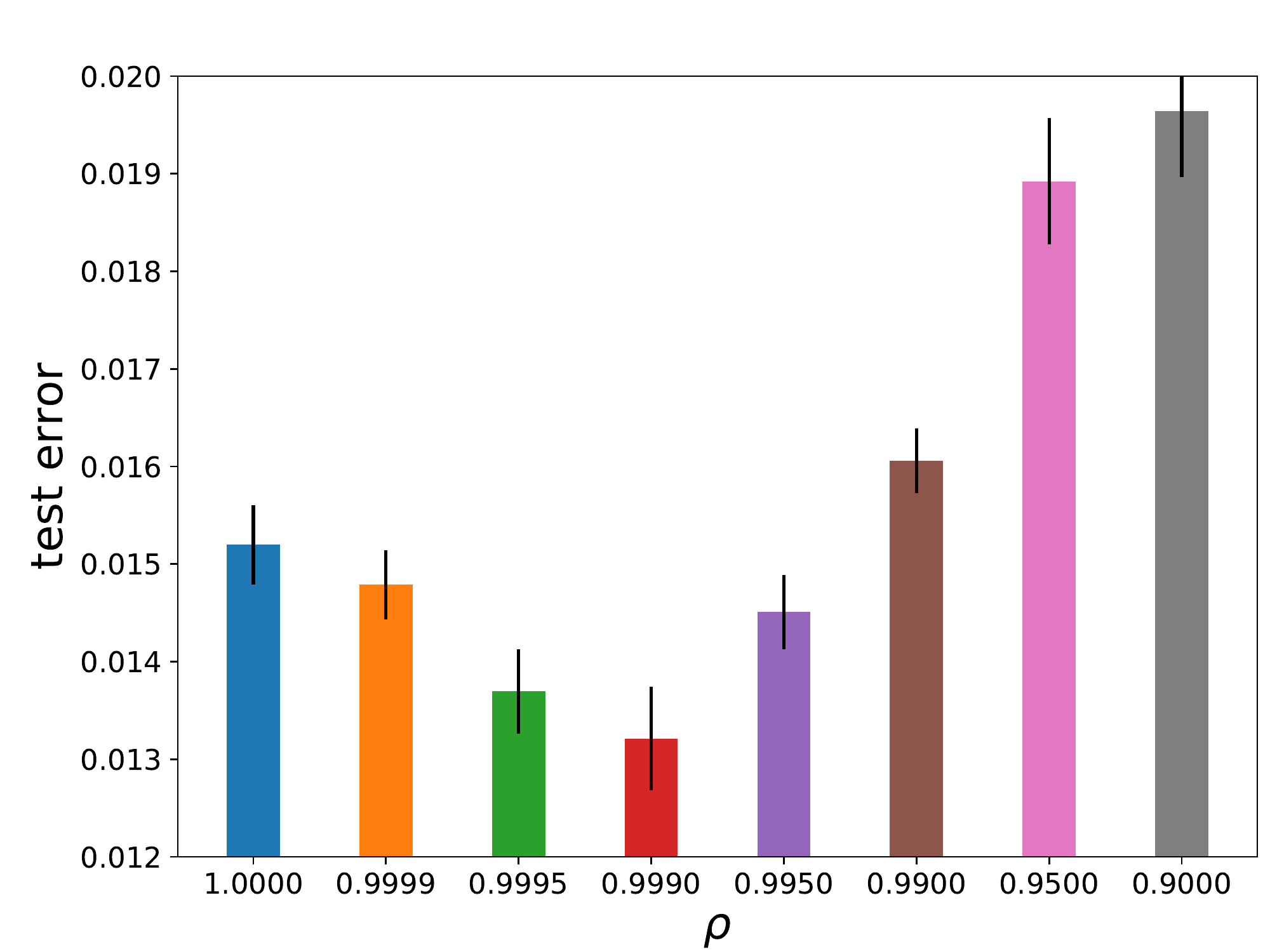}
  \includegraphics[width=0.32\textwidth]{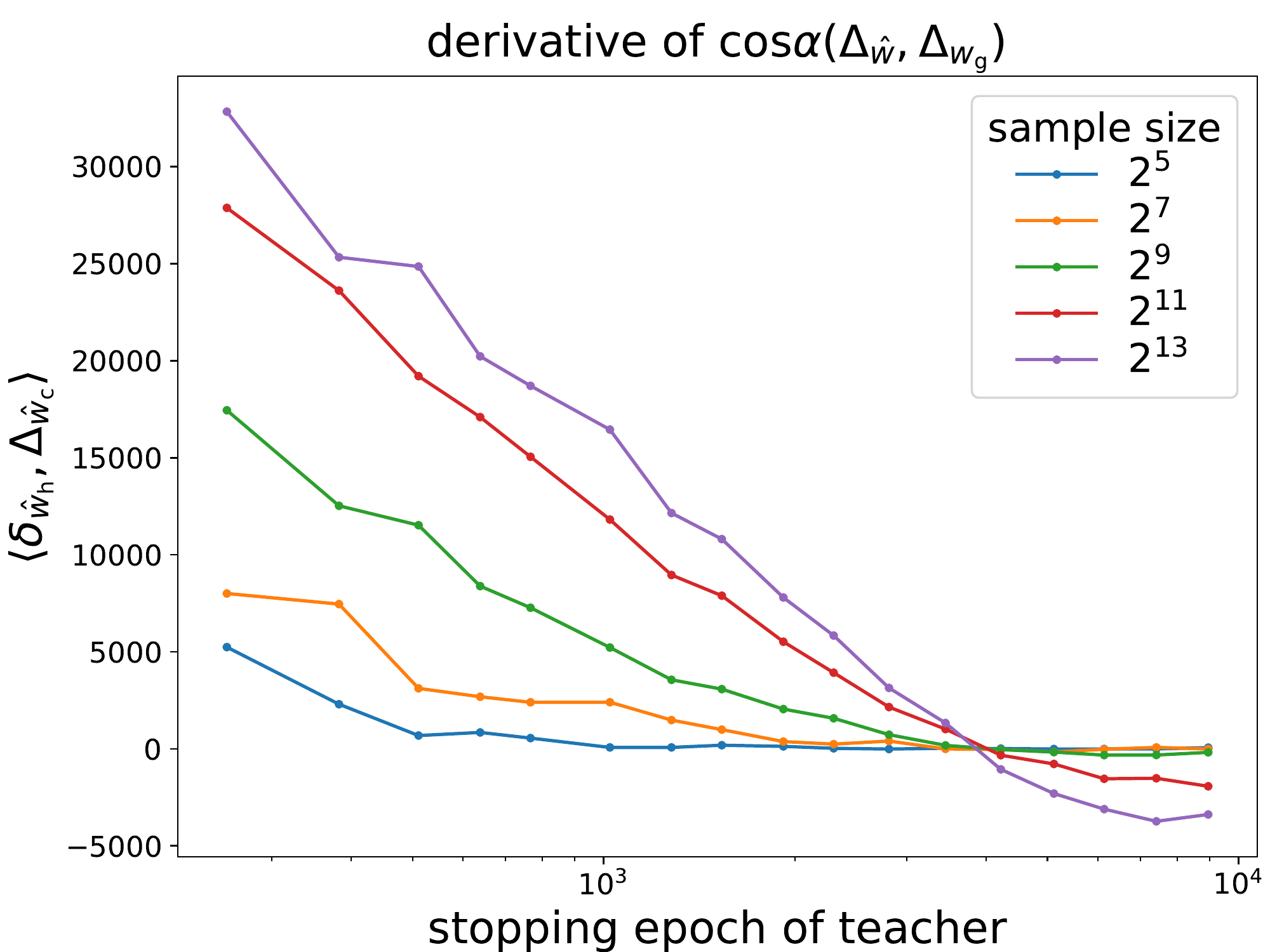}
  \vspace{-0.2cm}
  \caption{\label{fig:5}
  \textbf{Left and Middle:} Imperfect distillation on synthetic dataset and practical (CIFAR10/ResNet) dataset.
  These plots show that pure soft label distillation in imperfect KD is not optimal. \textbf{Right:} \(\langle \delta_{\hat{w}_\mathrm{h}}, \Delta_{\hat{w}_{\mathrm{c}}} \rangle\) is proportional to \(\partial \cos \alpha(\Delta_{\hat{w}}, \Delta_{w_{\mathrm{g}}})/ \partial(1-\rho)\). The sign of it denotes whether adding hard labels can or cannot reduce the angle between the student and oracle. The stopping epoch of teacher is positively related to teacher's generalization ability. The epoch when \(\langle \delta_{\hat{w}_\mathrm{h}}, \Delta_{\hat{w}_{\mathrm{c}}} \rangle\) switches sign, is approximately when teacher outperforms the best of student network. See Sec.~\ref{supp:experiment_detail} for details.}
  \vspace{-0.2cm}
\end{figure}
Fig.~\ref{fig:5} right plots \(\langle \delta_{\hat{w}_\mathrm{h}}, \Delta_{\hat{w}_{\mathrm{c}}} \rangle\) with respect to teacher's stopping epoch \(e\), which is positively related to teacher's generalization ability. Interestingly, when \(e\) is small and teacher is not perfect,  \(\langle \delta_{\hat{w}_\mathrm{h}}, \Delta_{\hat{w}_{\mathrm{c}}} \rangle\) is positive for all sample sizes, but when \(e\) is big enough where the teacher outperforms the best of hard label training student, \(\langle \delta_{\hat{w}_\mathrm{h}}, \Delta_{\hat{w}_{\mathrm{c}}} \rangle\) becomes negative, which means teacher is so accurate that hard labels cannot give useful hint. 
As a short conclusion, adding hard labels is a balance between the inefficiency of data usage, and the correction from teacher's mistakes.

\section{Conclusion}

In this work, we attempt to explain knowledge distillation in the setting of wide network linearization. We give a transfer risk bound based on angle distribution in random feature space. Then we show that, the fast decay of transfer risk for pure soft label distillation, may be caused by the fast decay of student's weight angle with respect to that of oracle model. Then we show that, early stopping of teacher and distillation with a higher soft ratio are both  beneficial in making efficient use of data. Finally, even if hard labels are data inefficient, we demonstrate that they can correct an imperfect teacher's mistakes, and therefore a little portion of hard labels are needed in practical distillation.

For future work, we would like to design new form of distillation loss which does not suffer from discontinuity and at the same time, can still correct teacher's mistakes. We would also like to tighten our transfer risk bound and fit it to a practical nonlinear neural network.

\section*{Broader Impact}

Knowledge distillation is a heavily used technique for model compression. It is of high industrial importance since small models are easy deployed and cost saving. Unfortunately, it remains as a black box. The lack of a theoretical explanation limits a wider application of this technique.
Our work provide a theoretical analysis and rethinking of this technique. Even though this work has no direct societal impact, with the guidance of our analysis, new distillation techniques of higher efficiency and performance may be proposed.
However, development of model compression techniques can have negative impact: it reduces the time and cost of training functional deep learning applications, which make the abuse of deep learning techniques easier. We suggest that deep learning community and governments to put forward stronger restrictions on the opensource of data and models, and more careful supervision on the abuse of computing power.

\section*{Acknowledgement}
This project is supported by The National Defense Basic Scientific Research Project, China (No. JCKY2018204C004), National Natural Science Foundation  of  China  (No.61806009 and 61932001),  PKU-Baidu Funding 2019BD005, Beijing Academy of Artificial  Intelligence  (BAAI). 


\clearpage
\appendix

\newcounter{suppequation}
\newcounter{suppfigure}

\renewcommand{\thesection}{S\arabic{section}}
\renewcommand{\thefigure}{S\arabic{suppfigure}} 
\renewcommand{\theequation}{S\arabic{suppequation}}

\section{\label{supp:distillation_loss_convergence}Convergence of Distillation Loss}

Even though \cite{du2019gradient} only proves the convergence for only L2 loss, we believe this  also holds for our distillation loss, with a little bit of modification. The key idea of \cite{du2019gradient} is that, convergence is guaranteed by the near constancy of NTK matrix \(\hat{\Theta}(\mathbf{X},\mathbf{X})\).
Then we can use the following equation to prove convergence,
\begin{equation}
\stepcounter{suppequation}
    \dot{\mathbf{z}}_{\mathrm{s}} = - \eta \hat{\Theta}(\mathbf{X},\mathbf{X}) (\mathbf{z}_{\mathrm{s}} - \mathbf{z}_{\mathrm{eff}}).
\end{equation}
As proved in the original paper of NTK(\cite{jacot2018neural}), the near constancy of \(\hat{\Theta}(\mathbf{X},\mathbf{X})\) has no requirement on the type of loss, so this is also true for our distillation loss. Then, the difference only lies in the second term \(\mathbf{z}_{\mathrm{s}} - \mathbf{z}_{\mathrm{eff}}\), which in the case of distillation, is substituted  with \(\partial_{\mathbf{z}_{\mathrm{s}}} \mathcal{L}(\mathbf{z}_{\mathrm{s}}, \mathbf{z}_{\mathrm{eff}})\) (same as Eq.~6 in \cite{lee2019wide}). Due to the fact of finite training data and the convexity of \(\mathcal{L}\) (w.r.t \(\mathbf{z}_{\mathrm{s}}\)), the gradient can be lower bounded by another L2 loss, therefore
\begin{equation}
\stepcounter{suppequation}
    |\partial_{\mathbf{z}_{\mathrm{s}}} \mathcal{L}(\mathbf{z}_{\mathrm{s}}, \mathbf{z}_{\mathrm{eff}})| \geq \mu| \mathbf{z}_{\mathrm{s}}- \mathbf{z}_{\mathrm{eff}}|,
\end{equation}
then the convergence of distillation loss can be guaranteed.
A similar proof of convergence is used in Theorem A.~3 of \cite{phuong2019towards} for linear distillation.

\section{\label{supp:Resnet_vary_datanum} Test Error on CIFAR10 with ResNet}

\begin{figure}[htbp]\stepcounter{suppfigure}
  \centering
  \includegraphics[width=0.5\textwidth]{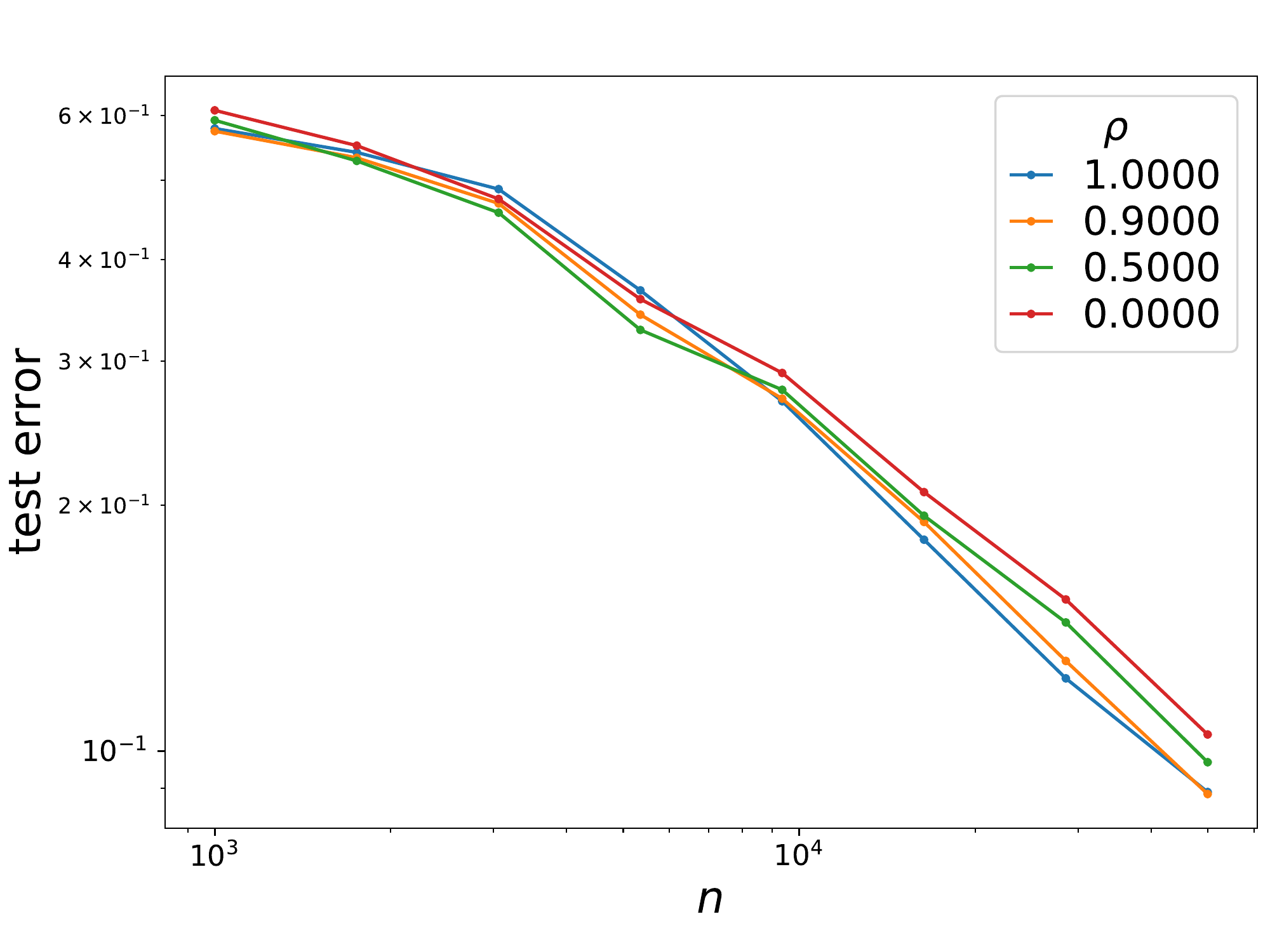}
  \vspace{-0.2cm}
  \caption{\label{fig:s1}
  Test error of knowledge distillation on CIFAR10 dataset with ResNet structure with respect to difference sample size \(n\). 
  }
  \vspace{-0.2cm}
\end{figure}

The settings of Fig.\ref{fig:s1} are same as that of Fig.5 left in the main paper. All data points are averages of 5 times of training, for the purpose of eliminating variations in randomness. The curves shows a near power law relation of test error with respect to sample size \(n\), especially when \(n\) is large. 

\section{\label{supp:proof_of_risk_bound}Proof of Transfer Risk Bound}

One difference of our work from \cite{phuong2019towards} is that the decision boundary contains the initial function as a bias term in NTK linearization \(h(x) = 1 \Leftrightarrow f(x;w_0) + \Delta_w ^{\top}\phi(x) > 0\). We use zero function weight change \(\Delta_{w_{\mathrm{z}}}\) to merge this bias term into part of weight change. Now we restate our risk bound and provide our proof.

\begin{theorem}
  Given input distribution \(P(x)\), training samples \(\mathbf{X} = [x_1, \cdots, x_n]\), oracle weight change \(\Delta_{w_*}\), zero weight change \(\Delta_{w_{\mathrm{z}}}\) and accumulative angle distribution \(p(\beta)\), the transfer risk is bounded by,
  \begin{equation}
  \stepcounter{suppequation}
    R \leq p(\frac{\pi}{2} - \bar{\alpha}_n),
  \end{equation}
  where \(\bar{\alpha}_n = \bar{\alpha}(\Delta_{w_*} - \Delta_{w_{\mathrm{z}}}, \Delta_{\hat{w}} - \Delta_{w_{\mathrm{z}}})\) and \(\Delta_{\hat{w}}\) is student's converged weight change.
\end{theorem}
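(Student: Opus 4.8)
The plan is to reduce the bound to a purely geometric statement about three vectors and then verify it by a two-dimensional wedge argument. Write $u = \Delta_{w_*} - \Delta_{w_{\mathrm{z}}}$ and $v = \Delta_{\hat{w}} - \Delta_{w_{\mathrm{z}}}$, so that after the bias-free simplification the effective logit is $z_{\mathrm{s,eff}}(x) = u^{\top}\phi(x)$, the student logit is $z_{\mathrm{s}}(x) = v^{\top}\phi(x)$, and (since $\mathrm{sgn}\,z_{\mathrm{t}} = \mathrm{sgn}\,z_{\mathrm{s,eff}}$ for a perfect teacher) the transfer risk is $\mathcal{R}_n = \mathbb{P}_{x}[(u^{\top}\phi(x))(v^{\top}\phi(x)) < 0]$, with $\bar{\alpha}_n = \bar{\alpha}(u,v)$. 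The whole statement then follows from the geometric claim that whenever $(u^{\top}\phi)(v^{\top}\phi) < 0$ one has $\bar{\alpha}(\phi,u) > \tfrac{\pi}{2} - \bar{\alpha}_n$. Granting this, $\mathcal{R}_n \le \mathbb{P}_x[\bar{\alpha}(\phi(x),u) > \tfrac{\pi}{2} - \bar{\alpha}_n] = p(\tfrac{\pi}{2} - \bar{\alpha}_n)$ is immediate from the definition of $p$.

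The step I would treat as the crux, and the one that makes the \emph{acute}-angle convention legitimate, is to record that $v$ is a \emph{projection} of $u$. By the projection identity $\Delta_{\hat{w}} = \mathbf{P}_{\Phi}\Delta_{w_*}$ of Eq.~\eqref{eq:projection}, and since $\Delta_{w_{\mathrm{z}}}$ is itself a learned weight change lying in $\mathrm{span}\{\phi(\mathbf{X})\}$, we get $v = \mathbf{P}_{\Phi}u$. Hence $u^{\top}v = \|\mathbf{P}_{\Phi}u\|_2^2 \ge 0$, so the full angle between $u$ and $v$ is at most $\pi/2$ and coincides with the acute angle $\bar{\alpha}_n$. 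This is essential: without it, a student direction $v$ making an obtuse angle with $u$ would place $\phi \approx u$ (acute angle near $0$) inside the disagreement set, and the claimed bound would fail. The projection structure rules this out.

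To prove the geometric claim I would first pass to the plane $\Pi = \mathrm{span}\{u,v\}$. Writing $\phi = \phi_{\parallel} + \phi_{\perp}$ with $\phi_{\parallel}$ the orthogonal projection onto $\Pi$, the sign product $(u^{\top}\phi)(v^{\top}\phi) = (u^{\top}\phi_{\parallel})(v^{\top}\phi_{\parallel})$ depends only on $\phi_{\parallel}$, while $\cos\bar{\alpha}(\phi,u) = |u^{\top}\phi_{\parallel}|/(\|u\|_2\|\phi\|_2) \le \cos\bar{\alpha}(\phi_{\parallel},u)$ because $\|\phi\|_2 \ge \|\phi_{\parallel}\|_2$; thus $\bar{\alpha}(\phi,u) \ge \bar{\alpha}(\phi_{\parallel},u)$ and it suffices to argue in $\Pi$. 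Placing $u$ at angle $0$ and $v$ at angle $\gamma = \bar{\alpha}_n \in [0,\pi/2]$, the set where $u^{\top}\phi_{\parallel}$ and $v^{\top}\phi_{\parallel}$ have opposite signs is exactly the two wedges $(\tfrac{\pi}{2}, \tfrac{\pi}{2}+\gamma)$ and its central reflection; on these wedges the full angle $\theta$ to $u$ lies in $(\tfrac{\pi}{2}, \tfrac{\pi}{2}+\gamma)$, so the acute angle $\min(\theta,\pi-\theta) = \pi-\theta$ exceeds $\tfrac{\pi}{2} - \gamma = \tfrac{\pi}{2} - \bar{\alpha}_n$. Because $\gamma \le \pi/2$, these wedges stay away from the $u$-direction itself, which is precisely what guarantees the inequality is strict and uniform. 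Combining this with the projection inequality establishes the claim, and hence the theorem. The main obstacle is entirely the bookkeeping of the acute-angle convention together with the lift from $\Pi$ to the full feature space; the planar wedge computation is elementary once $\bar{\alpha}_n$ is known to be acute.
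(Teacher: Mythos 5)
Your geometric core is sound and is, in substance, the same argument as the paper's. The paper proves ``\(\alpha(\phi(x),u) < \pi/2 - \bar\alpha_n\) or \(> \pi/2 + \bar\alpha_n\) implies the student agrees with the oracle'' via the triangle inequality for angles, \(\alpha(a,b)\le\alpha(b,c)+\alpha(c,a)\); your contrapositive ``disagreement implies \(\bar\alpha(\phi,u) > \pi/2-\bar\alpha_n\)'', proved by projecting \(\phi\) onto \(\mathrm{span}\{u,v\}\) and running the planar wedge computation, is an equivalent and correct way to obtain the same inclusion of events, after which \(\mathcal{R}_n \le p(\pi/2-\bar\alpha_n)\) follows identically from the definition of \(p\). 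Both the projection inequality \(\bar\alpha(\phi,u)\ge\bar\alpha(\phi_\parallel,u)\) and the wedge bookkeeping are correct.

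The gap is in what you yourself call the crux: the claim that \(v = \Delta_{\hat w}-\Delta_{w_{\mathrm{z}}} = \mathbf{P}_{\Phi} u\) because ``\(\Delta_{w_{\mathrm{z}}}\) is itself a learned weight change lying in \(\mathrm{span}\{\phi(\mathbf{X})\}\)''. Under the paper's definitions this premise is false: \(\Delta_{w_{\mathrm{z}}}\) is the oracle weight change fitting the zero function over the whole input distribution (the paper approximates it by online-batch SGD, i.e.\ training with unlimited fresh samples), not something trained on the \(n\) samples \(\mathbf{X}\); hence \(\mathbf{P}_{\Phi}\Delta_{w_{\mathrm{z}}}\neq\Delta_{w_{\mathrm{z}}}\) in general, \(v\) is not a projection of \(u\), and \(u^\top v\ge 0\) does not follow the way you derive it. You are right that acuteness of \(\alpha(u,v)\) is essential (with \(v\) obtuse to \(u\), e.g.\ \(v=-u\), the bound fails), but it has to enter differently: the paper simply \emph{assumes} \(\alpha(u,v)\le\pi/2\), arguing this is reasonable because \(\|\Delta_{w_{\mathrm{z}}}\|_2\ll\|\Delta_{w_*}\|_2\) (supported empirically in its appendix), in which case \(\cos\alpha_n\approx\Delta_{\mathbf{z}}^{\top}\Theta_n^{-1}\Delta_{\mathbf{z}}\,/\,(\|\Delta_{w_*}\|_2\|\Delta_{\hat w}\|_2)>0\), using the exact identity \(\Delta_{\hat w}=\mathbf{P}_{\Phi}\Delta_{w_*}\), which does hold. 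To repair your write-up, either state the acute-angle condition as an explicit assumption, or justify it via the smallness of \(\Delta_{w_{\mathrm{z}}}\) as the paper does; the rest of your argument then goes through unchanged.
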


\begin{proof}

  Denote the oracle weight change and zero weight change as \(\Delta_{w_*}\) and \(\Delta_{w_{\mathrm{z}}}\). Further we denote \(\alpha(a,b) = \cos^{-1}(x^\top y / \sqrt{x^\top x \cdot y^\top y}) \in [0,\pi]\) as a supplement of \(\bar{\alpha}\). The transfer risk can be written as
  \begin{equation}
  \stepcounter{suppequation}
    \begin{split}
      \mathcal{R} &= \underset{x\sim P(x)}{\mathbb{P}}[ (\Delta_{w_*} - \Delta_{w_{\mathrm{z}}})^{\top}\phi(x)\cdot (\Delta_{\hat{w}} - \Delta_{w_{\mathrm{z}}})^{\top}\phi(x)<0 ] \\
      &= \underset{x\sim P(x)}{\mathbb{P}}\left[ \alpha(\phi(x), \Delta_{w_*} - \Delta_{w_{\mathrm{z}}}) < \frac{\pi}{2} , \alpha(\phi(x), \Delta_{\hat{w}} - \Delta_{w_{\mathrm{z}}}) > \frac{\pi}{2} \right] \\
      &+ \underset{x\sim P(x)}{\mathbb{P}}\left[ \alpha(\phi(x), \Delta_{w_*} - \Delta_{w_{\mathrm{z}}}) > \frac{\pi}{2} , \alpha(\phi(x), \Delta_{\hat{w}} - \Delta_{w_{\mathrm{z}}}) < \frac{\pi}{2} \right].
    \end{split}
  \end{equation}
  We further assume \(\alpha_n \leq \pi/2\) so that  \(\alpha_n = \bar{\alpha}_n\). In actual network where \(||\Delta_{w_{\mathrm{z}}}||_2\ll ||\Delta_{w_*}||_2\), this assumption is reasonable, since \(\cos \alpha_n \propto \Delta_{\mathbf{z}}^{\top}\Theta_n^{-1} \Delta_{\mathbf{z}} > 0\).  With the help of triangle inequality, \(\alpha(a,b) \leq \alpha(b,c) + \alpha(c,a)\), we can find a "easy" region where inputs are guaranteed to be correctly classified by student's model. For the case of \(\alpha(\phi(x), \Delta_{w_*} - \Delta_{w_{\mathrm{z}}})< \pi/2\), if we assume the "easy" region is \( \alpha(\phi(x), \Delta_{w_*} - \Delta_{w_{\mathrm{z}}}) < \frac{\pi}{2} - \bar{\alpha}_n\), then
  \begin{equation}\stepcounter{suppequation}
    \begin{split}
      \alpha(\phi(x), \Delta_{\hat{w}} - \Delta_{w_{\mathrm{z}}}) \leq & \alpha(\phi(x),\Delta_{w_*} - \Delta_{w_{\mathrm{z}}}) + \alpha(\Delta_{w_*} - \Delta_{w_{\mathrm{z}}}, \Delta_{\hat{w}} - \Delta_{w_{\mathrm{z}}}) \\
      \leq & \frac{\pi}{2} - \bar{\alpha}_n +  \bar{\alpha}_n = \frac{\pi}{2},
    \end{split}
  \end{equation}
  which means student model also gives a correct prediction. Similarly, \(\alpha(-a,b) \leq \alpha(b,c) + \alpha(c,-a) \Rightarrow \pi - \alpha(a,b) \leq \alpha(b,c) + \pi - \alpha(c,a)\). For the case of \(\alpha(\phi(x), \Delta_{w_*} - \Delta_{w_{\mathrm{z}}})> \pi/2\), if we assume the "easy" region is \( \pi - \alpha(\phi(x), \Delta_{w_*} - \Delta_{w_{\mathrm{z}}}) < \frac{\pi}{2} - \bar{\alpha}_n \), then
  \begin{equation}\stepcounter{suppequation}
    \begin{split}
      \pi - \alpha(\phi(x), \Delta_{\hat{w}} - \Delta_{w_{\mathrm{z}}}) \leq &  \pi - \alpha(\phi(x), \Delta_{w_*} - \Delta_{w_{\mathrm{z}}}) + \alpha(\Delta_{w_*} - \Delta_{w_{\mathrm{z}}}, \Delta_{\hat{w}} - \Delta_{w_{\mathrm{z}}}) \\
      \leq & \frac{\pi}{2} - \bar{\alpha}_n +  \bar{\alpha}_n = \frac{\pi}{2}.
    \end{split}
  \end{equation}

  Then we bound can the transfer risk by the worst case where all \(\phi(x)\) outside this "easy" region is incorrectly classified, so that
  \begin{equation}\stepcounter{suppequation}
    \begin{split}
      \mathcal{R} \leq & \underset{x\sim P(x)}{\mathbb{P}}\left[ \frac{\pi}{2} > \alpha(\phi(x), \Delta_{w_*} - \Delta_{w_{\mathrm{z}}}) > \frac{\pi}{2} - \bar{\alpha}_n \right] \\
      + & \underset{x\sim P(x)}{\mathbb{P}}\left[ \frac{\pi}{2} < \alpha(\phi(x), \Delta_{w_*} - \Delta_{w_{\mathrm{z}}}) < \frac{\pi}{2} + \bar{\alpha}_n \right] \\
      =& \underset{x\sim P(x)}{\mathbb{P}}\left[ \bar{\alpha}(\phi(x), \Delta_{w_*} - \Delta_{w_{\mathrm{z}}}) > \frac{\pi}{2} - \bar{\alpha}_n \right] = p(\frac{\pi}{2} - \bar{\alpha}_n).
    \end{split}
  \end{equation}

\end{proof}

\section{\label{supp:experiment_detail}Experiment Details}

All of our experiments are performed on \texttt{Pytorch 1.5}.

\paragraph{Effective Student Logits Calculation}

\(z_{\mathrm{s,eff}}(z_{\mathrm{t}},y_{\mathrm{g}})\) is attained by solving 
\begin{equation*}
  \lim_{\tau\to\infty} z_{\mathrm{s}}(\tau) = \hat{z}_{\mathrm{s}}, \quad \frac{d \ell}{d \hat{z}_{\mathrm{s}}} = \frac{\rho}{T}(\sigma(\hat{z}_{\mathrm{s}}/T) - \sigma(z_{\mathrm{t}}/T)) + (1-\rho)(\sigma(\hat{z}_{\mathrm{s}}) - y_{\mathrm{g}}) = 0.
\end{equation*}
We use first order gradient descent with a learning rate of \(10.0\) and \(10,000\) iterations to get good results. Second order method like Newton's method is approachable, but it fails when \(|z_{\mathrm{t}}|\) is big and the second order gradient vanishes.

\paragraph{Network Structure}

In our experiment, we  denote input layer as \(d\), and the number of hidden layers as \(L\). We also set the network to have a fixed hidden layer width \(m\). All networks use NTK parameterization and initialization (described in \cite{jacot2018neural}), which has the following form,
\begin{equation*}
  \begin{aligned}
    &h^{1} = \sigma_W W^{0}x/\sqrt{d} + \sigma_b b^{0}, x^{1}=g(h^{1}),\\
    &h^{l+1} = \sigma_W W^{l}x^{l}/\sqrt{m} + \sigma_b b^{l}, \;
    x^{l+1} = g(h^{l+1}),\;
    l=1,2,\cdots, L-1, \\
    &f(x; w) = \sigma_W W^{L}x^{L}/\sqrt{m} + \sigma_b b^{L},\;
  \end{aligned}
\end{equation*}
where \(g(\cdot)\) is the activation function, \(W^{l}_{ij},b^{l}_i\sim \mathcal{N}(0,1)\) are the parameters of each layer, and \((\sigma_W,\sigma_b)\) are the hyperparameters and we use \(\sigma_W=\sigma_b=1.0\) through out our experiments. Further we denote \(w = \cup_{l=0}^{L}\{W^{l}, b^{l}\}\) as the set of parameters.

The linearized network of random features is calculated according to  \(f_{\mathrm{lin}}(x) = f(x;w_0) + \Delta_{w}^{\top}\partial_{w}f(x;w_0)\). 
Practically we calculate the derivative with the help of a \texttt{torch.autograd.functional.jvp} in \texttt{Pytorch 1.5} . All networks are optimized by standard Adam algorithm (\(\beta_1 = 0.9, \beta_2 = 0.999\)) with different learning rate \(\eta\) and batch size \(|\mathcal{D}|\).

\paragraph{Neural Tangent Kernel(NTK) Calculation}

NTK of a ReLU network is calculated according to Appendix C and E in \cite{lee2019wide}.

\paragraph{Input Distribution Design}
In our experiments, the data distribution is fixed to be \(\mathcal{N}(0, 5^2)\).

\paragraph{\label{supp:gaussian_mixture}Gaussian Mixture}

The Gaussian mixture function in Fig.3 has the form of
\begin{equation*}
  z_{\mathrm{gaussian}}(x) = \sum_{j=1}^{q} A_j \exp(-(x-x_j)^2/\sigma_j)
\end{equation*}
\(A_j, x_j, \sigma_j\) are all sampled with randomness, to insure the diversity of Gaussian mixture. \(A_j\) are sampled around a fixed amplitude \(A\), but with equal chance of positive and negative sign. \(x_j\) is sampled according to a gaussian distribution \(\mathcal{N}(0, \sigma_p)\) to make sure most of \(x_j\) are in the distribution of \(x\). \(\sigma_j\) are also sampled around a fixed amplitude \(\sigma\), but \(\sigma\) is changed according to mode number \(q\),  \(\sigma = 15/q^{2}\) so that all points can show their mode in \(z_{\mathrm{gaussian}}(x)\)'s shape.

In the first difficulty control task (Fig.3, left), we control difficulty by control the number of modes \(q\). In the second difficulty control task (Fig.3, right), \(q\) is fixed, and we multiply \(z_{\mathrm{gaussian}}(x)\) with a random variable of \(s \in \{\pm 1\}\), \(z_*(x) = s\times z_{\mathrm{gaussian}}(x) \). \(s\) has probabilities of \(\{1-p_{\mathrm{flip}}, p_{\mathrm{flip}}\}\) to be \(\{1, -1\}\).

In the following Fig.~\ref{fig:s2}, we give a plot of Gaussian mixture function of different mode numbers.

\begin{figure}[htbp]\stepcounter{suppfigure}
  \centering
  \includegraphics[width=0.32\textwidth]{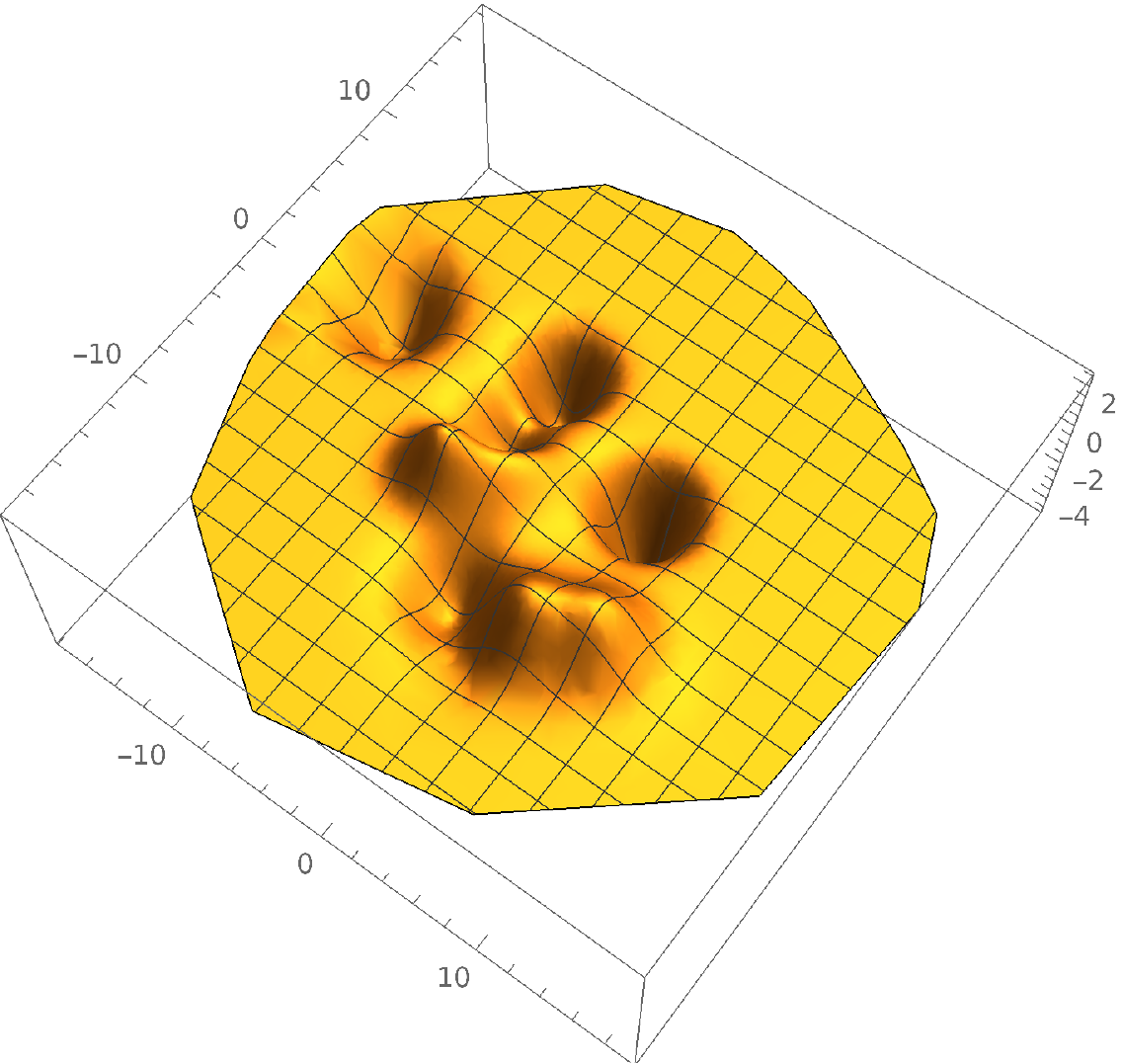}
  \includegraphics[width=0.32\textwidth]{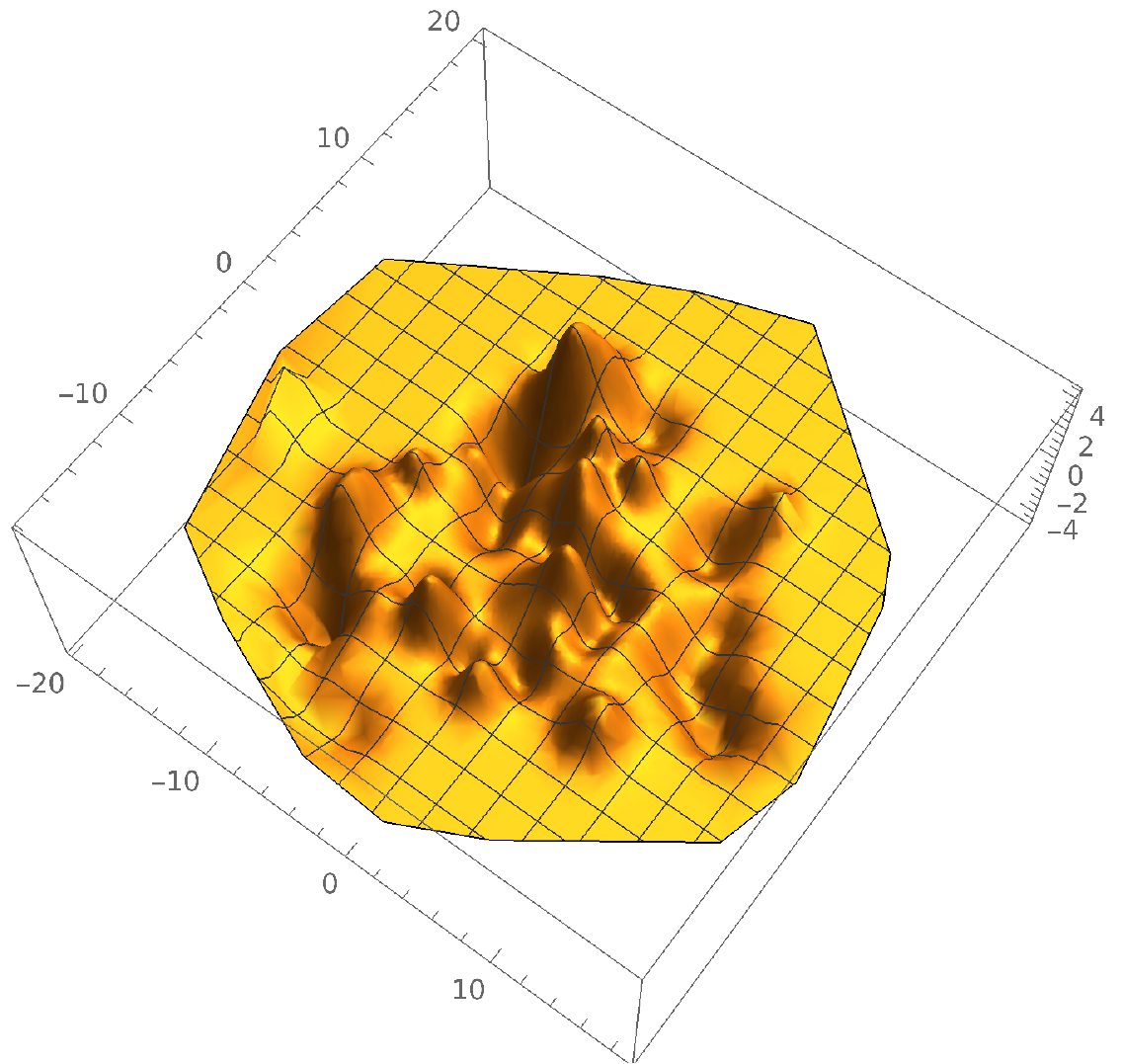}
  \includegraphics[width=0.32\textwidth]{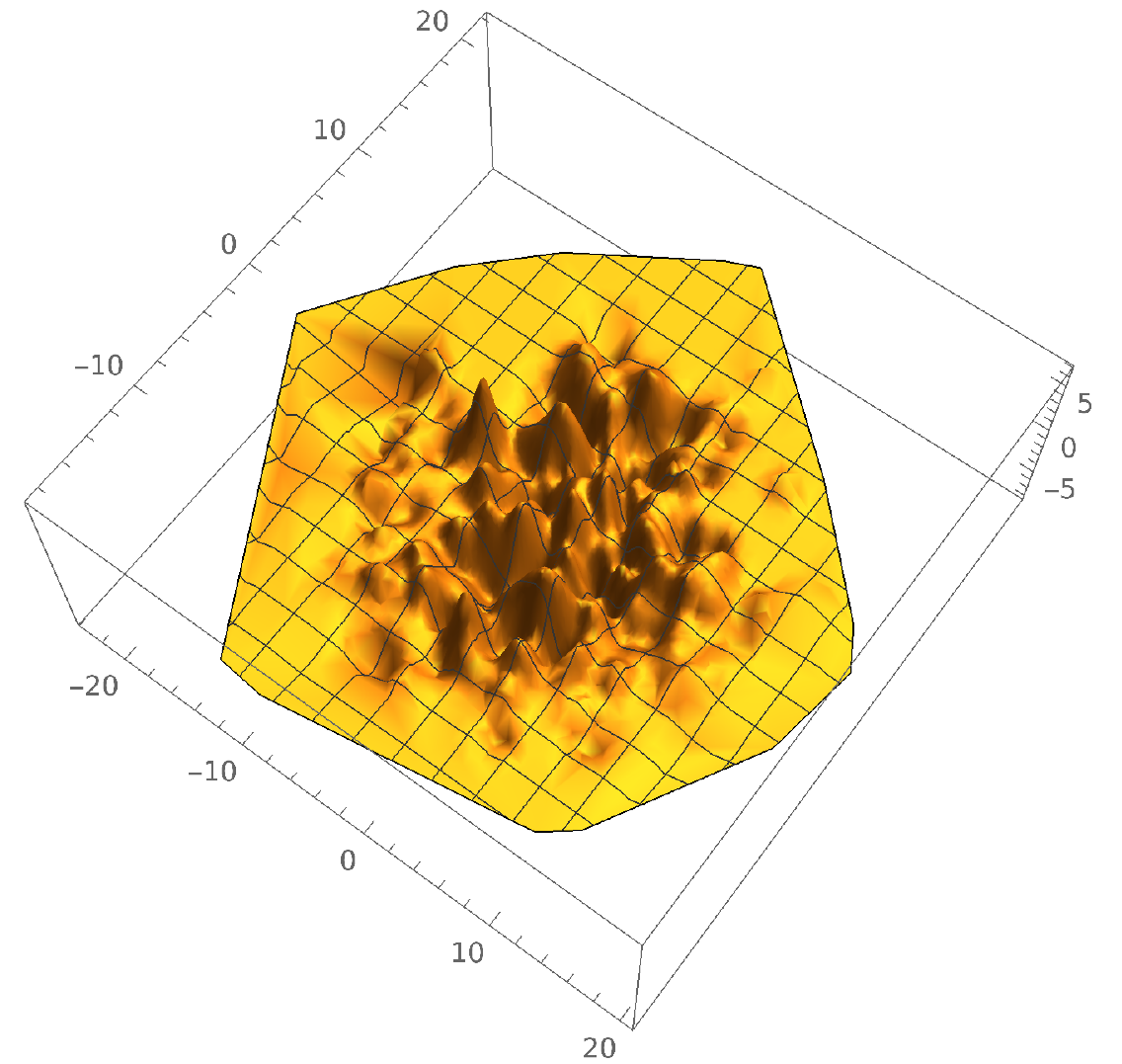}
  \vspace{-0.2cm}
  \caption{\label{fig:s2}
  Examples of Gaussian mixture function. From left to right, each has a mode number of 10, 50, 250.
  }
  \vspace{-0.2cm}
\end{figure}

\paragraph{Calculation of Data Inefficiency}

In the definition of data inefficiency \(\mathcal{I}(n) = n \left[\ln  \mathbb{E}||\Delta_{\hat{w},n+1}||_2 - \ln  \mathbb{E} ||\Delta_{\hat{w},n}||_2\right]\), 
we need to calculate \(||\Delta_{\hat{w},n}||_2 = \sqrt{\Delta_{\mathbf{z}}^{\top}\Theta^{-1}\Delta_{\mathbf{z}}}\). \(\Theta^{-1}\Delta_{\mathbf{z}}\) is calculated by the linear solver \texttt{torch.solve} in \texttt{Pytorch}.

\subsection{Unstated Details of Each Figure}

\paragraph{Fig.2}

The perfect teacher in Fig.2 of main text is the output of a network trained with hard labels generated by a gaussian mixture function. The teacher network has a setting of \(d=2, L=5, m=1024\). It is trained with  learning rate \(\eta = 0.0005\) and batch size \(|D|=4096\). We use \textit{online-batch} training for all teachers, which means samples are regenerated after each epoch, in order to avoid the problem of sample correlation between teacher and student. 

The students in Fig.2 of main text are the linearized network  with a structure of \(d=2, L=3, m=2048\). They are trained with learning rate \(\eta = 0.01\) and batch size \(|D|=512\). The distillation temperature is \(T=10.0\).

We also have to mention that, due to cross entropy loss, the convergence of student logits is especially hard when \(|z_{\mathrm{t}}|\gg 1\). To make our student more easily converge, and to make the scale of teacher match the scale of split generated by hard label \(|z_{\mathrm{t}}| \approx \Delta_{z,\mathrm{split}}\), we reduce the scale of teacher's logits \(z_{\mathrm{t},\mathrm{new}} = r\times z_{\mathrm{t}}\) by a factor \(r\), called \textit{reduction factor}. In our perfect distillation experiment(Fig.2), \(r=0.3\).

Here we give a plot of the decision boundary and output logits of the teacher in Fig.~\ref{fig:s3}.

\begin{figure}[htbp]\stepcounter{suppfigure}
  \centering
  \includegraphics[width=0.49\textwidth]{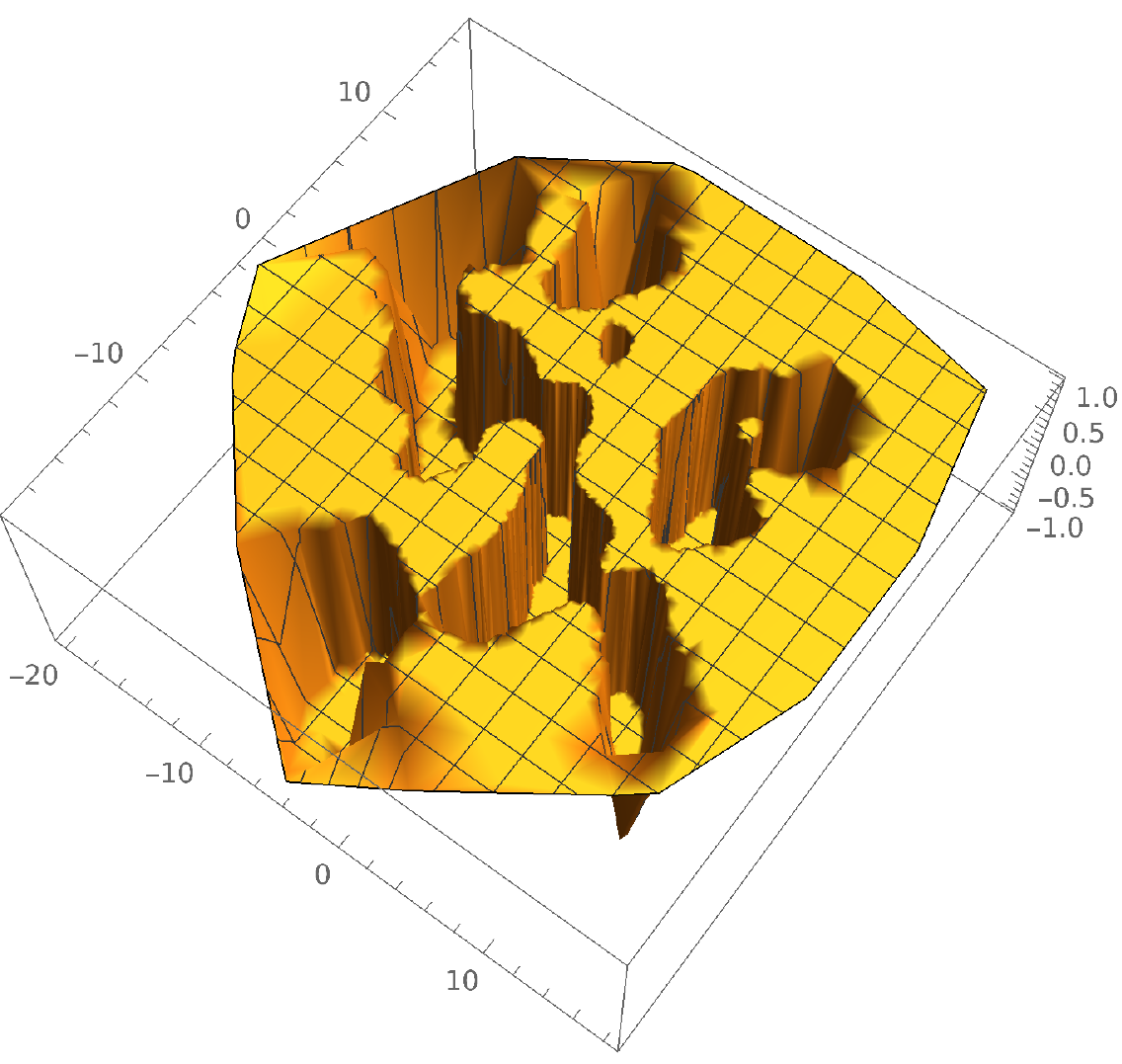}
  \includegraphics[width=0.49\textwidth]{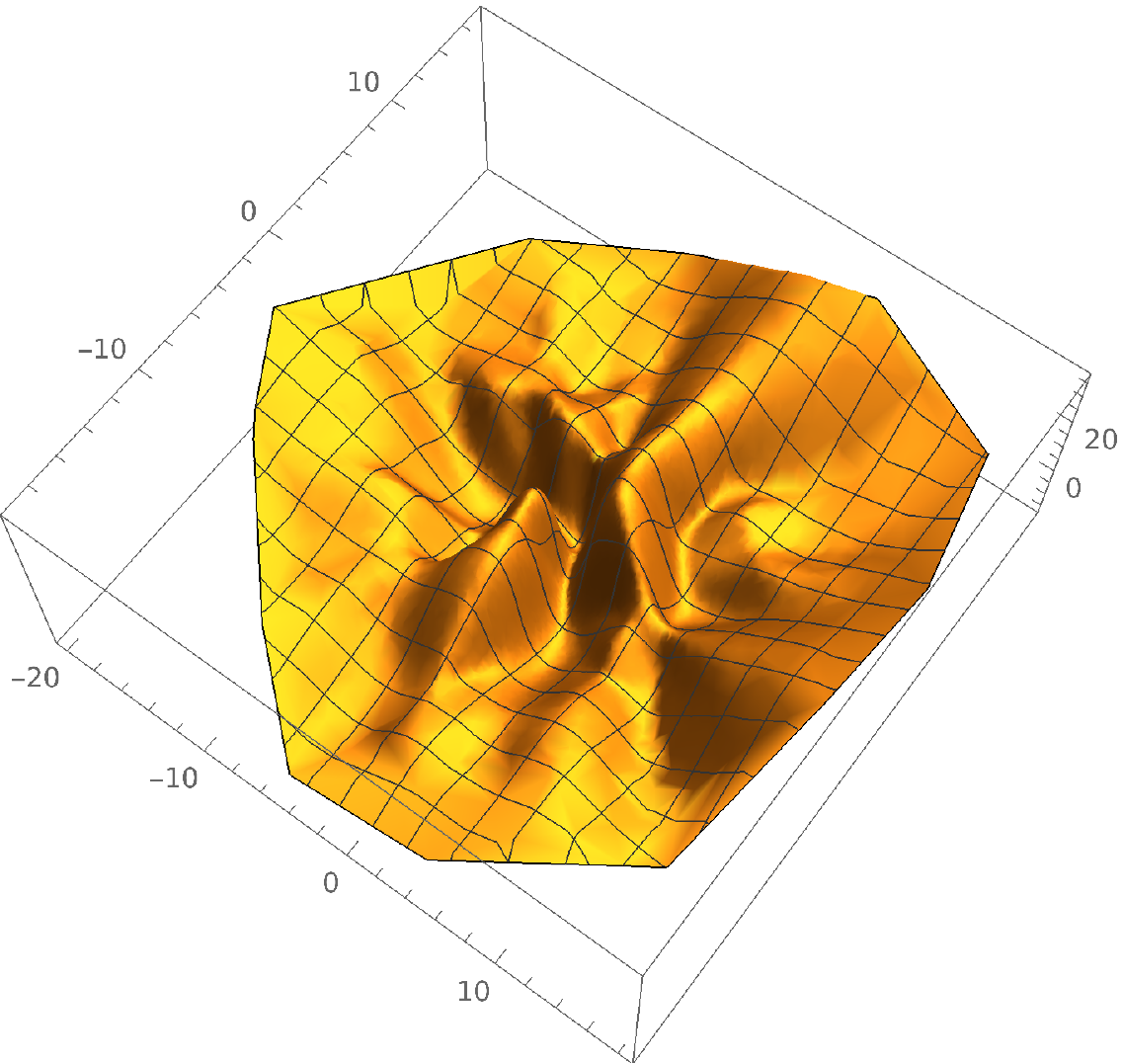}

  \vspace{-0.2cm}
  \caption{\label{fig:s3}
  The decision boundary (left) and output logits(right) of teacher network.
  }
  \vspace{-0.2cm}
\end{figure}

\paragraph{Fig.3}

In Fig.3 of main text, calculation of \(\sqrt{\Delta_{\mathbf{z}}^{\top}\Theta^{-1}\Delta_{\mathbf{z}}}\). \(\Theta^{-1}\Delta_{\mathbf{z}}\) involves the output logits of initialized student network. The initialized network as a structure setting of \(d=2, L=5, m=1024\). 

The data inefficiency is approximately a derivative \(\mathcal{I}(n) = \partial \ln  \mathbb{E} ||\Delta_{\hat{w},n}||_2 / \partial \ln n \). For better illustration, we plot in Fig.~\ref{fig:s4} the intermediate quantity, \(\mathbb{E} ||\Delta_{\hat{w},n}||_2\) with respect to \(n\).

\begin{figure}[htbp]\stepcounter{suppfigure}
  \centering
  \includegraphics[width=0.49\textwidth]{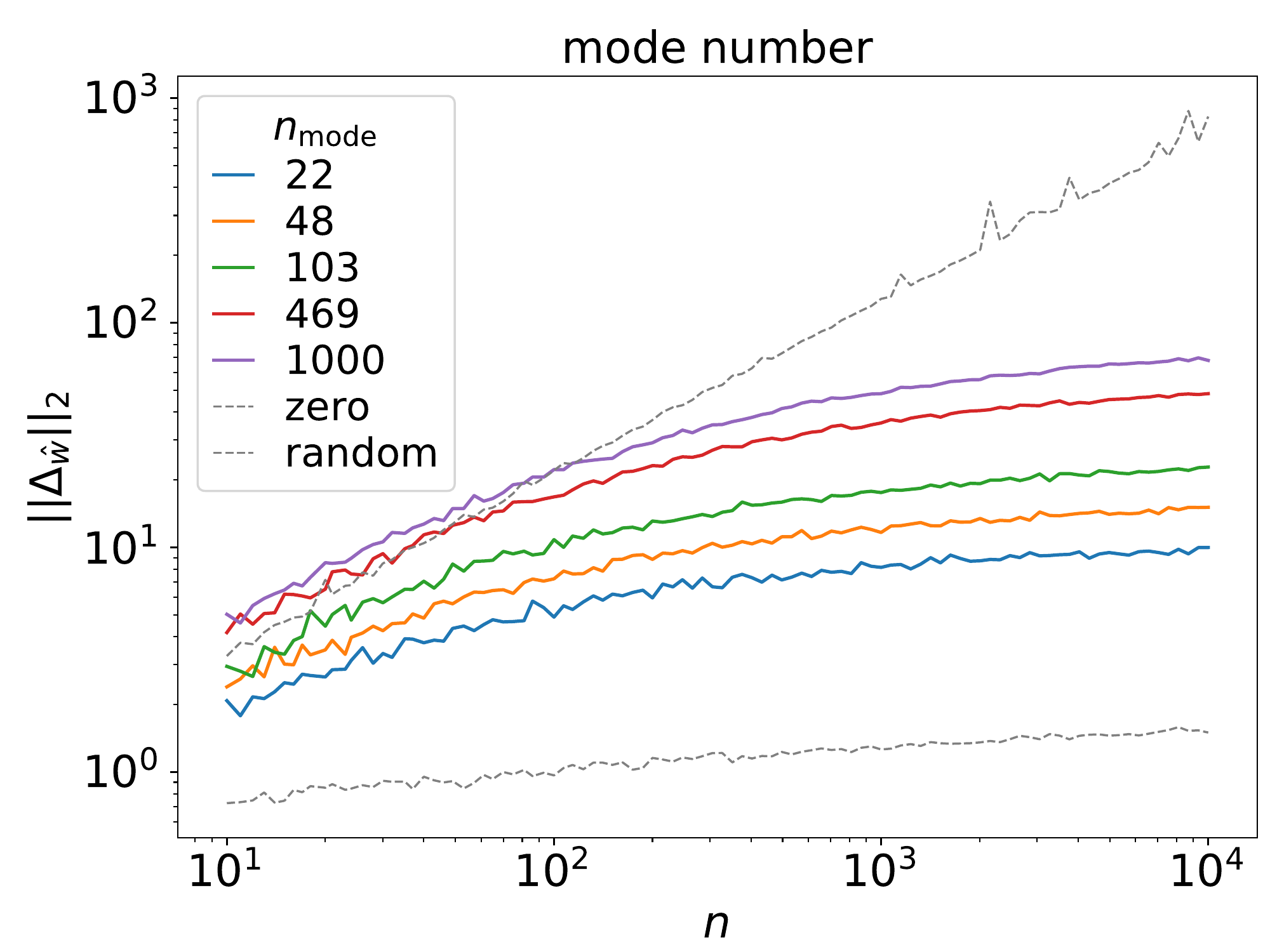}
  \includegraphics[width=0.49\textwidth]{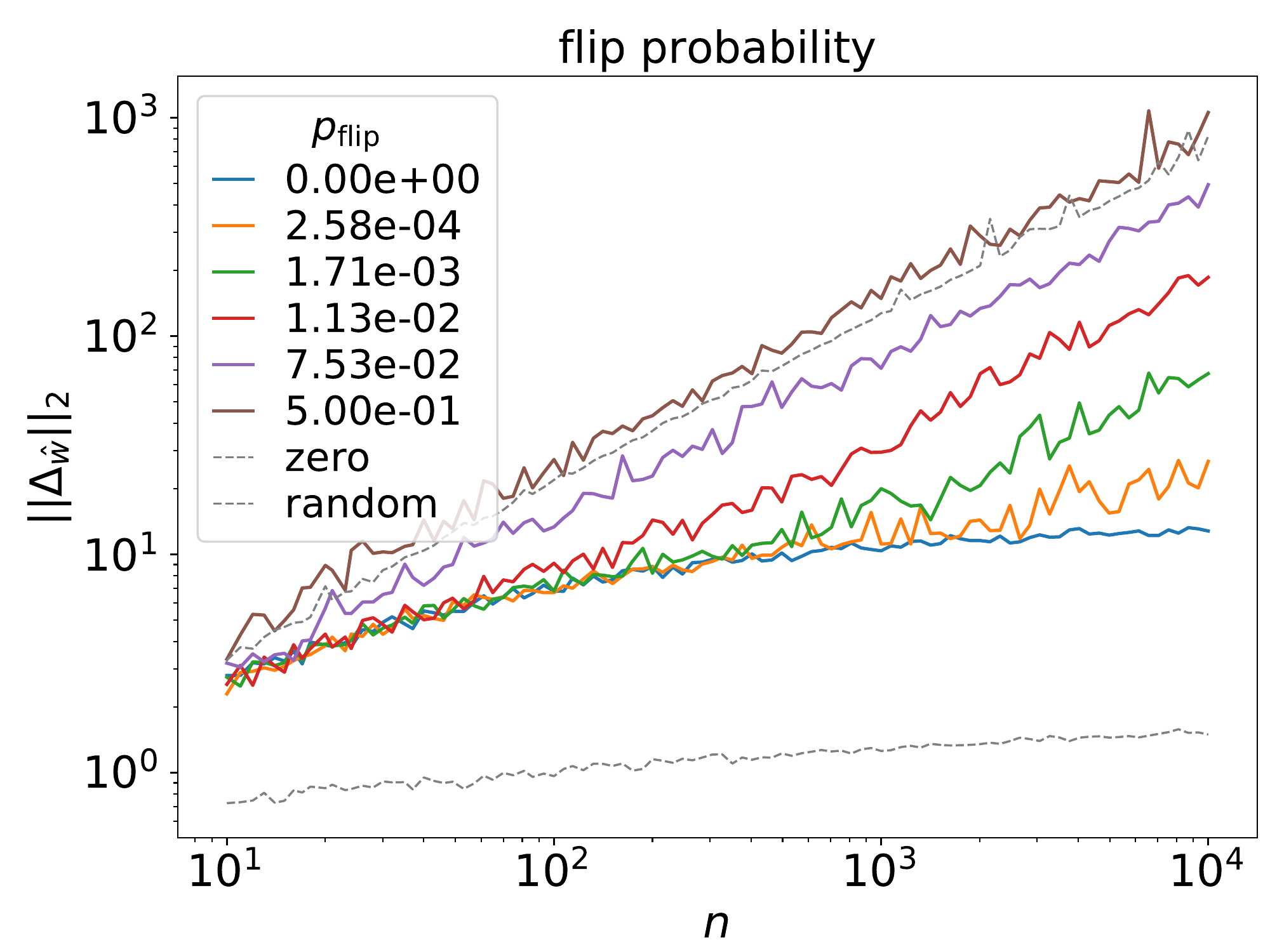}

  \vspace{-0.2cm}
  \caption{\label{fig:s4}
  \(||\Delta_{\hat{w}}||_2\) with respect to sample size \(n\). In the plot \(\Delta_{\mathbf{z}}\) of gaussian mixture functions are normalized by their scale \(\Delta_{\mathbf{z},\mathrm{new}} = \Delta_{\mathbf{z}} / ||\Delta_{\mathbf{z}}||\). The dashed lines are references of random logits(upper dashed lines) and constant zero function(lower dashed lines).
  }
  \vspace{-0.2cm}
\end{figure}

\paragraph{Fig.4}

The teacher of Fig.4 has a network structure setting of \(d=2, L=5, m=1024\). It is trained by the hard label generated by the teacher network of Fig.2 with a learning rate \(\eta = 0.01\) and a total epoch of \(32768\).  In Fig.4 left we use teacher of different stopping epochs to give the plot, while in Fig.4 right we fix the teacher which stops at \(511\)st epoch.

Here we also give plot of \(\mathbb{E} ||\Delta_{\hat{w},n}||_2\) with respect to \(n\) in Fig.~\ref{fig:s5}.

\begin{figure}[htbp]\stepcounter{suppfigure}
  \centering
  \includegraphics[width=0.49\textwidth]{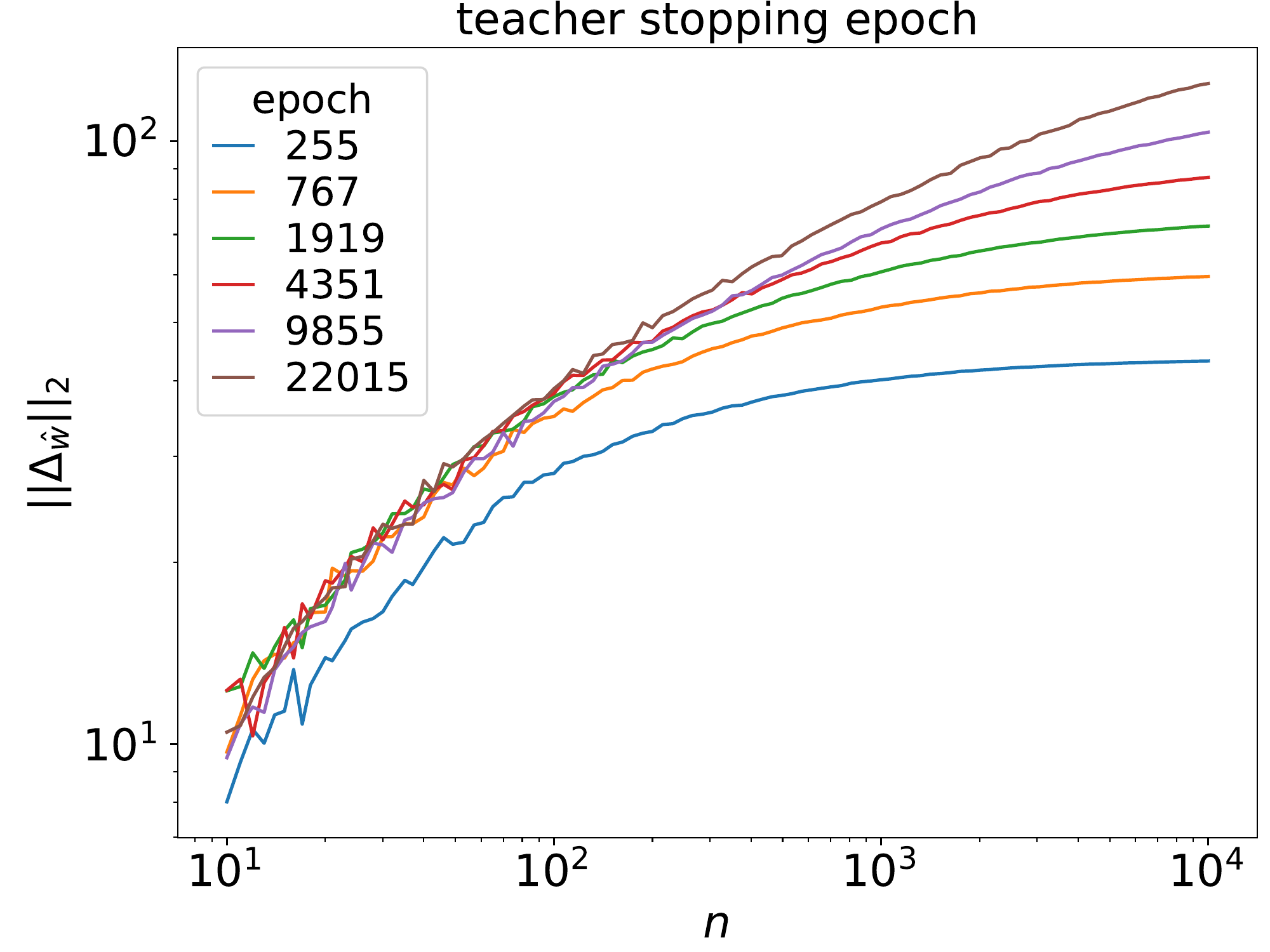}
  \includegraphics[width=0.49\textwidth]{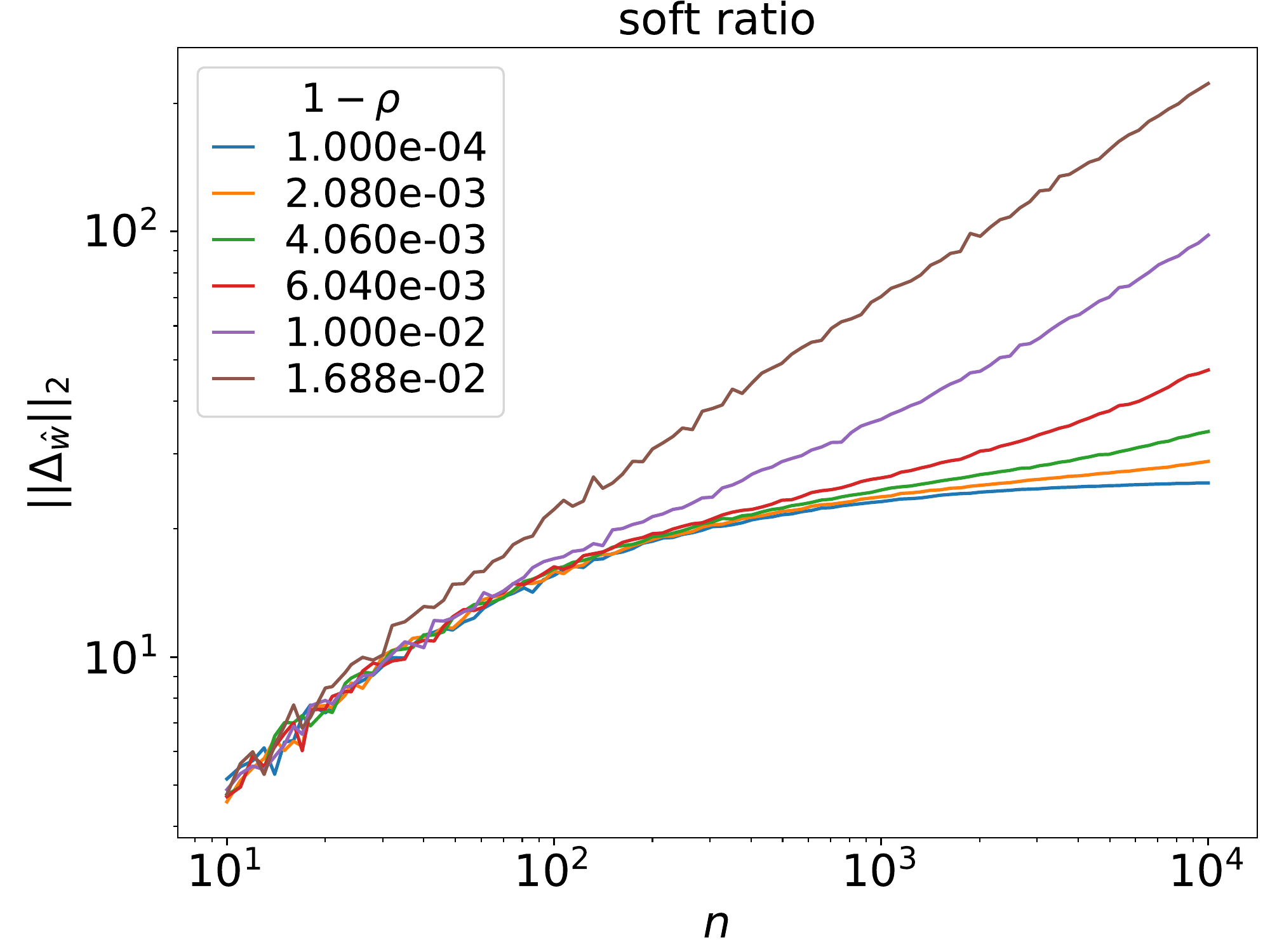}

  \vspace{-0.2cm}
  \caption{\label{fig:s5}
  \(||\Delta_{\hat{w}}||_2\) with respect to sample size \(n\). Similar to Fig.~\ref{fig:s4}, \(\Delta_{\mathbf{z}}\) are normalized.
  }
  \vspace{-0.2cm}
\end{figure}

\paragraph{Fig.5}
\textbf{Left:} The teacher (ResNet50) has a test error of \(6.97\%\). The students are all ResNet18 model, while the students' (ResNet18) baseline test error is \(10.48\%\) if trained from scratch. 
\textbf{Middle:} The teacher is trained by the hard label generated by the teacher network of Fig.2 with a learning rate \(\eta = 0.0001\).
It is early stopped at  \(e=5113\) and at a test error of \(1.06\%\) to make this phenomenon obvious. The sample size for student is \(2^{14}\). 
\textbf{Right:} The teacher is trained in the same way as the Fig.5 middle, but with a total epoch of \(10240\). In the calculation of \(\langle \delta_{\hat{w}_\mathrm{h}}, \Delta_{\hat{w}_{\mathrm{c}}} \rangle\), a reduction factor of \(r=0.3\) is used.

\section{\label{supp:zero_function}Fitting a constant zero function.}

This section is aimed to show that fitting a constant zero function is much easier to train than a normal task. In the following Fig.~\ref{fig:s6}, we give plots on weight change \(\Delta_w\) of fitting both constant zero function and the teacher function in Fig.2 using linearized network of structure \(d=2, L=3, m=2048\). The figure shows that constant zero function is faster to converge, and \(\Delta_{w_{\mathrm{z}}}\) is much smaller than \(\Delta_{w}\) of a normal task.

\begin{figure}[htbp]\stepcounter{suppfigure}
  \centering
  \includegraphics[width=0.5\textwidth]{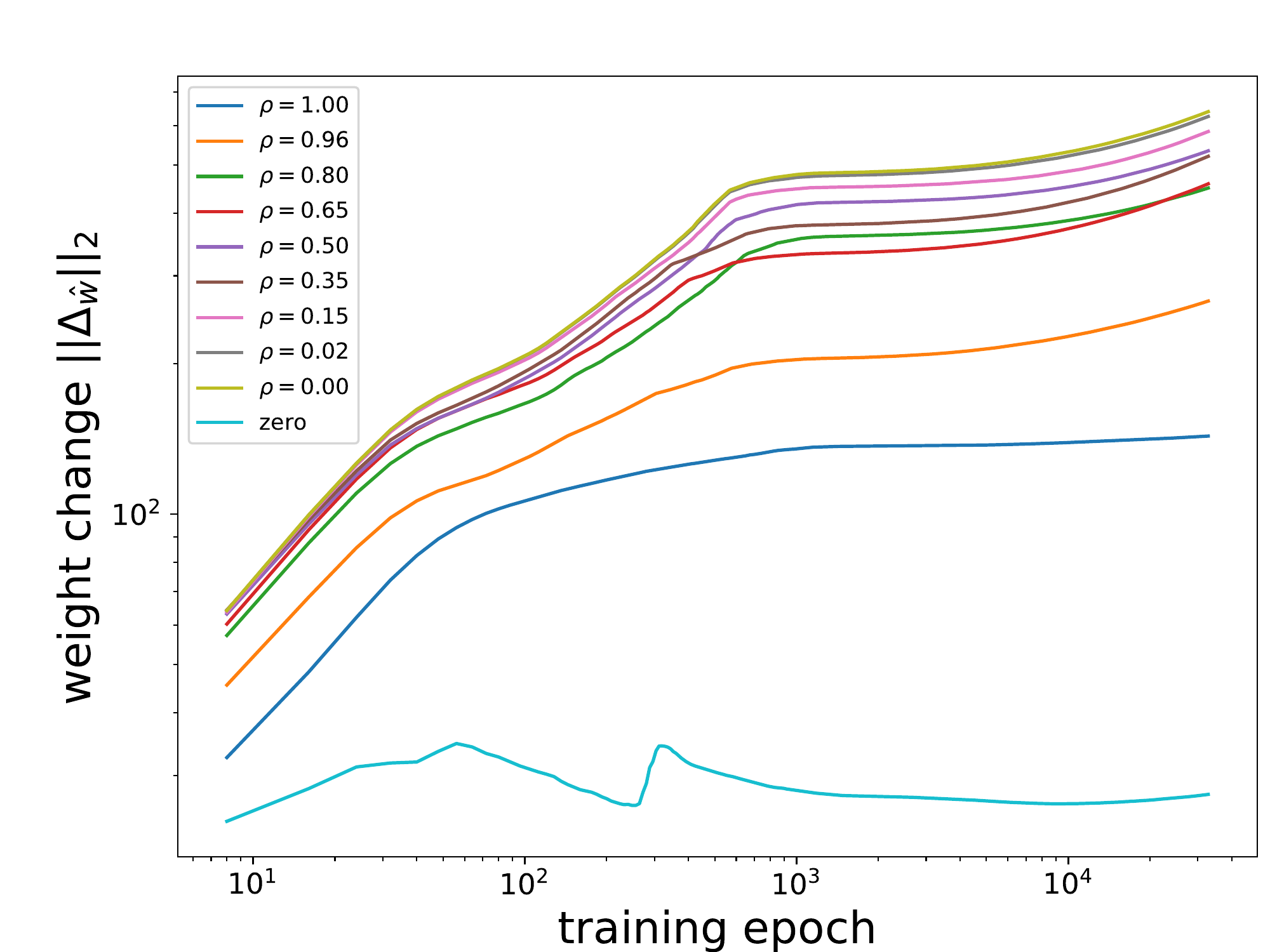}
  \vspace{-0.2cm}
  \caption{\label{fig:s6}
  Weight change plot with respect to training epoch. The bottom curve is that of training zero function, while other curves are distillation tasks, which are plotted as references.
  }
  \vspace{-0.2cm}
\end{figure}

\section{\label{supp:angle_bound} Angle Bound of Random Features of ReLU Network}

This section we aim to show that \cite{phuong2019towards}'s bound \(R_n \leq {\min}_\beta\; p(\beta) + p(\pi/2 - \beta)^n\) is loose in linearized wide neural network. Intuitively, \(p(\beta) \to 1\) when \(\beta \to 0\) and \(p(\beta) \to 0\) when \(\beta \to \pi/2\). If \(p(\beta) < 1\) strictly when \(\beta > 0\), then we can choose a \(\beta \to \pi/2\) that has a relatively small \(p(\beta)\), and with the help of the \(n\) factor in \(p(\pi/2 - \beta)^n\), the total risk bound can approach a small value.

However, as we will show below, the angle of random feature and oracle weight is bounded \(|\cos\bar{\alpha}(\phi(x), \Delta_{w_*} - \Delta_{w_{\mathrm{z}}})| \leq C_1\), so that \(p(\beta)\equiv 1\) strictly for a range of \(\beta \in [0, \beta_t], \beta_t\in (0,\pi/2)\). As we see in Fig.2 middle in the main text, this \(\beta_t\) is probably near \(\pi/2\), which means most random feature \(\phi(x)\) is nearly perpendicular to the oracle. Then the power factor of \(p(\pi/2 - \beta)^n\) will not help reduce the risk bound, so that their risk bound gives an estimate of \(R_n \geq 1\).

To demonstrate this, we use Eq.~7 in the main text to estimate the cosine of angle between random feature and oracle weight, 
\begin{equation*} 
  \cos \bar{\alpha}(\phi(x), \Delta_{w_*} - \Delta_{w_{\mathrm{z}}}) = \frac{( \Delta_{w_*} - \Delta_{w_{\mathrm{z}}})^T\phi(x)}{|| \Delta_{w_*} - \Delta_{w_{\mathrm{z}}}||_2\cdot ||\phi(x)||_2} = \frac{z_{\mathrm{s,eff}}(x)}{|| \Delta_{w_*} - \Delta_{w_{\mathrm{z}}}||_2\cdot \sqrt{\Theta(x,x)}}.
\end{equation*}
In the numerator, \(z_{\mathrm{s,eff}}(x) \sim O(z_{\mathrm{t}}(x))\), especially when \(|z_{\mathrm{t}}(x)| \gg 1\). In knowledge distillation, we assume teacher is also a ReLU network so that the output is also bounded by a linear function, \(|z_{\mathrm{t}}(x))| \leq C_2||x||_2\).
In the dominator, the factor \(|| \Delta_{w_*} - \Delta_{w_{\mathrm{z}}}||_2\) is fixed and we find that for ReLU, the single value neural tangent kernel is \(\Theta(x,x) \sim \Omega(x^\top x)\). Therefore this fraction is bounded \(\cos \bar{\alpha}(\phi(x), \Delta_{w_*} - \Delta_{w_{\mathrm{z}}}) \leq C_1 = C_2/|| \Delta_{w_*} - \Delta_{w_{\mathrm{z}}}||_2\). This \(C_1\) is probably much smaller than 1, as we see in Fig.2 middle of the main text. 

\begin{figure}[htbp]\stepcounter{suppfigure}
  \centering
  \includegraphics[width=0.5\textwidth]{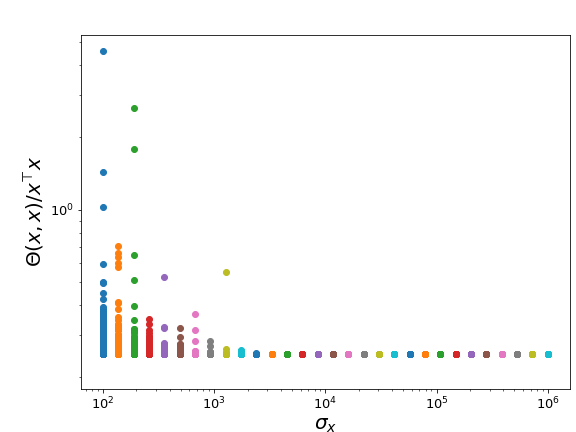}
  \vspace{-0.2cm}
  \caption{\label{fig:s7}
  The NTK \(\Theta(x,x)/(x^{\top}x)\) of a 3-hidden-layer network at different scale of x. 
  We sampled 10000 input smaples according to \(\mathcal{N}(0, \sigma_x^2)\), so that \(\sigma_x\) denotes the scale of input. The figure shows that \(\Theta(x,x) \geq x^{\top}x/4 \) and the inequality tends to equality when the scale is large. This figure demonstrate that \(\Theta(x,x) \sim \Omega(x^\top x)\).
  }
  \vspace{-0.2cm}
\end{figure}

\section{\label{supp:proof_of_theorem2} Proof of Theorem 2}

In this theorem we assume that \(z_{\mathrm{s,eff}} \approx z_{\mathrm{t}} + (1-\rho)\delta z_{\mathrm{h}}\), where \(\delta z_{\mathrm{h}}\) is implicitly determined by hard labels. Then we can approximate 
\begin{equation*}
    \langle \Delta_{\mathbf{z}_{\mathrm{g}}}, \Delta_{\mathbf{z}_{\mathrm{s,eff}} } \rangle_{\Theta_n} \approx
\langle \Delta_{\mathbf{z}_{\mathrm{g}}}, \Delta_{z_{\mathrm{t}}} \rangle_{\Theta_n} + 
(1-\rho)
\langle \Delta_{\mathbf{z}_{\mathrm{g}}}, \Delta_{\delta z_{\mathrm{h}} } \rangle_{\Theta_n},
\end{equation*}
and
\begin{equation*}
\begin{aligned}
    \frac{1}{\sqrt{ \langle \Delta_{z_{\mathrm{s,eff}}}, \Delta_{z_{\mathrm{s,eff}}} \rangle_{\Theta_n} }} &\approx \frac{1}{\sqrt{ \langle \Delta_{z_{\mathrm{t}}}, \Delta_{z_{\mathrm{t}}} \rangle_{\Theta_n} + 2(1-\rho) \langle \Delta_{z_{\mathrm{t}}}, \Delta_{\delta z_{\mathrm{h}}} \rangle_{\Theta_n}}} \\
    &\approx \frac{1}{\sqrt{ \langle \Delta_{z_{\mathrm{t}}}, \Delta_{z_{\mathrm{t}}} \rangle_{\Theta_n}}} - (1-\rho) \frac{\langle \Delta_{z_{\mathrm{t}}}, \Delta_{\delta z_{\mathrm{h}}} \rangle_{\Theta_n}}{\langle \Delta_{z_{\mathrm{t}}}, \Delta_{z_{\mathrm{t}}} \rangle_{\Theta_n}^{3/2}}.
\end{aligned}
\end{equation*}
By substituting the above two terms into Eq.11 of the main paper, and neglect higher order terms, then we can get Eq.12,
\begin{equation}\stepcounter{suppequation}
\begin{split}
    \frac{\partial \cos \alpha(\Delta_{\hat{w}}, \Delta_{w_\mathrm{g}})}{ \partial (1- \rho)} \Bigg|_{\rho = 1} = \;\;\;
  &\frac{1}{ || \Delta_{w_\mathrm{g}}||_2 \sqrt{ \langle \Delta_{\mathbf{z}_{\mathrm{t}}}, \Delta_{\mathbf{z}_{\mathrm{t}}} \rangle_{\Theta_n} }} 
 \times \\
 &\left( \langle \Delta_{\mathbf{z}_{\mathrm{g}}}, \delta \mathbf{z}_{\mathrm{h}} \rangle_{\Theta_n} - \frac{\langle \Delta_{\mathbf{z}_{\mathrm{g}}},  \Delta_{\mathbf{z}_{\mathrm{t}}} \rangle_{\Theta_n} }{\langle \Delta_{\mathbf{z}_{\mathrm{t}}},  \Delta_{\mathbf{z}_{\mathrm{t}}} \rangle_{\Theta_n} } \langle \Delta_{\mathbf{z}_{\mathrm{t}}}, \delta \mathbf{z}_{\mathrm{h}} \rangle_{\Theta_n} \right).
\end{split}
\end{equation}

\end{document}